\def\marginpar#1{\ignorespaces}
\DeclareMathOperator\tr{Tr}
\DeclareMathOperator\ess{ess \, sup}
\newtheorem{theorem}{Theorem}[section]
\newtheorem{lemma}[theorem]{Lemma}
\newtheorem{corollary}[theorem]{Corollary}
\newtheorem{assumption}[theorem]{Assumption}
\numberwithin{equation}{section}
\newcommand{\beq}{\begin{equation}}
\newcommand{\eeq}{\end{equation}}
\newcommand{\bal}{\begin{align}}
\newcommand{\eal}{\end{align}}
\newcommand{\bals}{\begin{align*}}
\newcommand{\eals}{\end{align*}}
\begin{document}
\title[Regret of $q$-learning]{Regret of exploratory policy improvement and $q$-learning}

\author[Wenpin Tang]{{Wenpin} Tang}
\address{Department of Industrial Engineering and Operations Research, Columbia University.
} \email{wt2319@columbia.edu}

\author[Xun Yu Zhou]{{Xun Yu} Zhou}
\address{Department of Industrial Engineering and Operations Research, Columbia University.
} \email{xz2574@columbia.edu}

\date{\today}
\begin{abstract}
We study the convergence of $q$-learning and related algorithms introduced by
Jia and Zhou (J. Mach. Learn. Res., 24 (2023), 161) for controlled diffusion processes. 
Under suitable conditions on the growth and regularity of the model parameters,
we provide a quantitative error and regret analysis of both
the exploratory policy improvement algorithm
and the $q$-learning algorithm.
\end{abstract}

\maketitle

\textit{Key words:} Continuous-time reinforcement learning, $q$-learning, regret analysis, entropy regularization, policy iteration/improvement, backward stochastic differential equation (BSDE), stochastic approximation.

\section{Introduction}
\label{sc1}

\quad Reinforcement learning (RL) is an active subarea in machine learning,
which has successfully been applied to solve complex decision-making problems
such as playing board games \cite{Silver16, Silver17} and video games \cite{Minh15},
driving autonomously \cite{Lev16, Mir17},
and more recently, aligning large language models and text-to-image generative models
with human preference \cite{Black23, Fan24, Ou22}.
RL research has predominantly focused on Markov decision processes (MDPs) in discrete time and space;
see \cite{SB18} for a detailed account of theory and applications for MDPs.

\quad Wang, Zariphopoulou, and Zhou \cite{WZZ20} are the first to formulate
and develop an entropy-regularized, exploratory control framework for RL with controlled diffusion processes, which is inherently in continuous
time with continuous state spaces and possibly continuous action (control) spaces.
In this framework, stochastic relaxed control is utilized to represent exploration,
capturing the notion of “trial and error” that is the core to RL.
Subsequent work aims at laying theoretical foundation for model-free RL in continuous-time
by a martingale approach \cite{JZ222, JZ221, JZ22},
and by policy optimization \cite{ZTY23}. Here, by ``model-free" we mean that the underlying dynamics are diffusion processes but their coefficients along with the reward functions are unknown.
The key insight of \cite{JZ222, JZ221, JZ22} is that one can derive learning objectives
from the martingale structure underpinning the continuous-time RL.
The theoretical results in those papers naturally lead to various ``model-free" algorithms for general RL tasks, in the sense that they learn optimal policies directly without attempting to learn/estimate the model parameters.
Many of these algorithms recover existing RL algorithms for MDPs
that were often proposed in a heuristic manner.
However, convergence and regret analysis of the algorithms, which has occupied a central stage in the RL study for MDPs, is still lacking for the diffusion counterpart.
To our best knowledge, the only works that carry out a {\it model-free} convergence analysis and derive sublinear regrets are \cite{huang2024sublinear} for a class of stochastic linear--quadratic (LQ) control problems
and \cite{HJZ2024} for continuous-time mean--variance portfolio selection, both of which apply/apapt the policy gradient algorithms developed  in \cite{JZ221} and exploit heavily the special structures of the problems.

\quad The purpose of the paper is to fill this gap
by providing quantitative analysis of (little) {\em $q$-learning} introduced in \cite{JZ22} and related algorithms for generally nonlinear RL problems. While (big) $Q$-learning is a key method for discrete-time MDP RL, the $Q$-function collapses in continuous time as it no longer depends on actions when the time step is infinitesimally small.  \cite{JZ22} proposes the notion of the $q$-function, which is the first-order derivative of the $Q$-function
with respect to time discretization. The $q$-function is entirely a continuous-time notion, which is closely related to the (generalized) Hamiltonian. The significance of this function lies in several aspects: 1) it is the function that needs to be learned in order to learn the optimal policies, rather than the individual functions appearing in the model; 2) it is the Gibbs exponent that can be used to improve the current policy; and 3) it is {\it learnable} (through an argument using It\^o's formula) by observable/coumputable  data including the temporal differences and the reward signals.
This leads to $q$-learning,
which alternates between stochastic approximation (to learn the value functions and the $q$-functions)
and policy iteration/improvement (to update and improve the policies). The general $q$-learning theory and algorithms have already been applied to various specific problems, including in particular diffusion models for generative AI \cite{gao2024reward, ZZT24}.

\quad The $q$-learning has two components:
\begin{equation*}
\mbox{ Learning values  and policies } + \mbox{ Policy iteration (PI)},
\end{equation*}
where ``values" include those of the value functions and $q$-functions and ``policies" are encoded by the $q$-functions.
In the classical control setting,
PI (or Howard's algorithm \cite{Bell57, How60}) improves the policies gradually
in order to approximate the optimal one.
So it is also called policy improvement; see \cite{KSS20,JZ22, TTZ23} for recent studies.
In the RL setting,
an entropy regularizer is added to encourage exploration,
which gives rise to the
exploratory control problem \cite{TZZ21}.
When the model parameters are known
(and the value function can be accessed),
one can simply use a version of PI for the exploratory control problem.
We call it {\em exploratory policy improvement},
which can be viewed as a model-based RL approach;
see \cite{MWZ24, SSZ24, TWZ24} for recent development.
The model-free RL approach,
on the other hand,
consists of learning directly the value function and the optimal policy, without estimating the model parameters.

\quad Our contribution is to provide a quantitative convergence analysis for both
the model-based exploratory policy improvement and the model-free $q$-learning,
while the former serves as a technical prerequisite for the latter.
Specially, we prove:
\begin{itemize}[itemsep = 3 pt]
\item
the exponential convergence of exploratory policy improvement (Theorem \ref{thm:cvrate});
\item
an explicit error bound of $q$-learning, depending on the regularity of the model parameters
and the learning rate
(Theorems \ref{thm:main5} and \ref{thm:final}).
\end{itemize}
As an intermediate step to study the $q$-learning algorithm,
we introduce {\em semi-$q$-learning} in which
the value functions and the reward functions are assumed to be known,
so we only need to learn the $q$-functions.
We establish a bound on the value approximation
in terms of the $q$-function approximation (Theorem \ref{thm:main2}),
which is crucial for the convergence analysis of (semi-)$q$-learning.

\quad Our proofs rely on both probabilistic and analytical arguments,
including backward stochastic differential equations (BSDEs),
partial differential equations (PDEs),
perturbation analysis,
and stochastic approximation.
We hope that this work will trigger and motivate further research in  continuous-time RL,
and especially in its emerging application to diffusion model alignment \cite{gao2024reward, ZZT24} and \cite[Section 7.3]{TZ24tut}.

\smallskip
\quad The remainder of the paper is organized as follows.
In Section \ref{sc2}, we provide background on the exploratory control problem and $q$-learning.
In Section \ref{sc3}, we study exploratory policy improvement,
and the $q$-learning algorithm is considered in Section \ref{sc4}.
We conclude with Section \ref{sc5}.

\section{Background}
\label{sc2}

\quad In this section, we provide background on
continuous-time RL, and in particular
$q$-learning, for controlled diffusion processes.
We first introduce notations that will be used throughout.
\begin{itemize}[itemsep = 3 pt]
\item
$\mathbb{R}$ is the set of real numbers.
For $x, y \in \mathbb{R}$,
$x \wedge y$ (resp. $x \vee y$) is the smaller (resp. larger) number of $x$ and $y$.
\item
For  a vector $x$, $|x|$ is the Euclidean norm of $x$;
for $\mathcal{A} \subset \mathbb{R}^d$, $|\mathcal{A}|$ is the volume of $\mathcal{A}$.
\item
For a matrix $A$, $A^T$ is the transpose of $A$, $\tr(A)$ is the trace of $A$,
and $|A|$ is the spectral norm of $A$.
\item
For a function $f$ on $X$, $|f|_\infty:= \sup_{X}|f(x)|$ denotes its sup-norm,
and $|f|_{\mathbb{L}^1(X)}:= \int_X |f(x)|dx$ is the $\mathbb{L}^1$-norm of $f$.
\item
For $f:[0,\infty) \times \mathbb{R}^d \ni (t,x) \to \mathbb{R}$,
$\partial_t f$ is its (partial) derivative in $t$,
and $\nabla f = \left(\frac{\partial f}{\partial x_i}\right)_i$ and $\nabla^2 f = \left(\frac{\partial f}{\partial x_i \partial x_j}\right)_{i,j}$ are its gradient and  Hessian in $x$ respectively.
\item
For  two probability density functions $p(\cdot)$ and $q(\cdot)$,
$d_{TV}(p(\cdot), q(\cdot)):= \sup_A |\int_Ap(a)da -\int_Aq(a)da|$ is the total variation distance between $p(\cdot)$ and $q(\cdot)$.
\item
For a $\sigma$-field $\mathcal{F}$, $\mathbb{L}^2(\mathcal{F})$ is the set of $\mathcal{F}$-measurable square-integrable random variables.
\item For any $\theta\in \mathbb{R}$ and $0 \le t \le T$, $\mathbb{H}^\theta_t:=\{\mbox{predictable processes }(x_s, \, t \le s \le T)
\mbox{ satisfying } \mathbb{E}\int_t^T e^{\theta s} |x_s|^2 ds < \infty\}$ with the $\mathbb{H}^\theta_t$-norm
$|x|_{\mathbb{H}^\theta_t}:= (\mathbb{E}\int_t^T e^{\theta s} |x_s|^2 ds)^{\frac{1}{2}}$.
\item
$a = \mathcal{O}(b)$ or $a \lesssim b$ means that  $a/b$ is bounded as some problem parameter tends to $0$ or $\infty$.
\item We use $C$ for a generic constant whose values may change from line to line.
\end{itemize}

\subsection{Classical control problem}
Let $(\Omega, \mathcal{F}, \mathbb{P}, \{\mathcal{F}_t\}_{t \ge 0})$ be a filtered probability space on which we define a $d'$-dimensional $\mathcal{F}_t$-adapted Brownian motion $(W_t, \, t \ge 0)$.
Let $\mathcal{A}$ be a generic action space,
and $u = (u_t, \, 0 \le t  \le T)$ be a control which is an adapted process taking values in $\mathcal{A}$.

\quad The stochastic control problem is to control the state variable $X_t \in \mathbb{R}^d$,
whose dynamic is governed by the (controlled) stochastic differential equation (SDE):
\begin{equation}
\label{eq:classicalS}
dX^u_t = b(t,X^u_t, u_t) dt + \sigma(t,X^u_t, u_t) dW_t,
\end{equation}
where $b: \mathbb{R}_+ \times \mathbb{R}^d \times \mathcal{A} \to \mathbb{R}^d$ is the drift coefficient,
and $\sigma: \mathbb{R}_+ \times \mathbb{R}^d \times \mathcal{A} \to \mathbb{R}^{d \times d'}$ is the covariance matrix (or diffusion coefficient) of the state variable.
Here the superscript `$u$' in $X^u_t$ emphasizes the dependence of the state variable on  control $u$.
The goal of the control problem is to maximize the total discounted reward (or objective functional), leading to the (optimal) value function:
\begin{equation}
\label{eq:classical}
\mathring{J}(t,x):=
 \sup_{u} \mathbb{E}\left[ \int_t^{T} e^{-\beta (s-t)}r(s, X^u_s, u_s) ds + e^{-\beta (T-t)}h(X^u_T)\bigg| X^u_t = x\right],
\end{equation}
where
$r: \mathbb{R}_+ \times \mathbb{R}^d \times \mathcal{A} \to \mathbb{R}$ is the running reward,
$h: \mathbb{R}^d \to \mathbb{R}$ is the terminal reward,
and $\beta > 0$ is the discount factor.

\quad In the classical setting, the functional forms of $r,h,b,\sigma$ are known.
The optimal control is generally represented as a deterministic mapping from the current time and state to the action space:
$u^*_t = u^*(t, X_t^*)$.
The mapping $u^*$ is called an optimal feedback control {\it policy},
and the corresponding optimally controlled process $(X_t^*, \, t \ge 0)$ satisfies the following SDE:
\begin{equation*}
dX^{*}_t = b(t,X_t^{*}, u^{*}(t, X^{*}_t))dt +  \sigma(t,X_t^{*}, u^{*}(t, X^{*}_t))dW_t,
\end{equation*}
provided that it is well-posed. Optimal policies are derived by powerful approaches such as dynamic programming or maximum principle;
see \cite{FS06, YZ99} for detailed accounts of the classical stochastic control theory.

\subsection{Exploratory control problem for RL}
When
the model parameters are unknown,\footnote{In many practical problems, the parameters $b, \sigma$ are unknown while the rewards $r, h$ may be specified in advance. The problem where $r, h$ are unknown and need to be learned from the observed (optimal) actions is referred to as {\em inverse reinforcement learning} \cite{NR00, Rus98}.}
the RL approach explores the unknown environment and learns optimal controls through repeated trials and errors. This calls for an essentially and drastically different school of thoughts from that of the classical control theory.

\quad Inspired by this, \cite{WZZ20} models exploration by
a probability distribution of controls $\pi = (\pi_t(\cdot), \, 0 \le t \le T)$
over the action space $\mathcal{A}$ from which each trial is sampled.
The exploratory state process is:
\begin{equation}
\label{eq:Xpi} 
dX^\pi_t = \widetilde{b}(t,X^\pi_t, \pi_t) dt + \widetilde{\sigma}(t,X^\pi_t, \pi_t) d\widetilde W_t,
\end{equation}
where $(\widetilde W_t, \, t \ge 0)$ is a $d$-dimensional $\mathcal{F}_t$-adapted Brownian motion, and the coefficients $\widetilde{b}(\cdot, \cdot, \cdot)$ and $\widetilde{\sigma}(\cdot, \cdot, \cdot)$ are defined by
\begin{equation}
\label{eq:bsigtil}
 \widetilde{b}(t,x, \pi):=\int_{\mathcal{A}} b(t,x,a) \pi(a) da \quad \mbox{and} \quad
  \widetilde{\sigma}(t,x, \pi):= \left(\int_{\mathcal{A}}\sigma(t,x,a) \sigma(t,x,a)^T \pi(a) da \right)^{\frac{1}{2}},
\end{equation}
for $(t, x,\pi)\in \mathbb{R}_+ \times \mathbb{R}^d\times \mathcal{P}(\mathcal{A})$,
with $\mathcal{P}(\mathcal{A})$ being the set of absolutely continuous probability density functions on $\mathcal{A}$.
The distributional control $\pi = (\pi_t(\cdot), \, 0 \le t \le T)$ is also known as the relaxed control in the control literature.
The objective is to maximize the entropy-regularized problem:
\begin{equation}
\label{eq:Jstargam}
\begin{aligned}
J^*(t, x) &:= \sup_\pi J(t,x; \pi) \\
& := \sup_\pi \mathbb{E} \bigg[ \int_t^T  e^{-\beta(s-t)} \int_\mathcal{A} [r(s, X^\pi_s, a) - \gamma \log \pi_s(a)] \pi_s(a)da \, ds
+ e^{-\beta(T -t)}h(X^\pi_T) \bigg| X^\pi_t = x \bigg],
\end{aligned}
\end{equation}
where $J(\cdot,\cdot; \pi)$ denotes the value function under the control $\pi(\cdot)$,
and $\gamma > 0$ is the (temperature) parameter representing the weight on  exploration (so a greater $\gamma$ encourages more exploration).

\quad Let
\begin{equation*}
H(t,x,a,p,q): = b(t,x,a) \, p + \frac{1}{2} \tr \left(\sigma(t,x,a) \sigma^T(t,x,a) q \right) + r(t,x,a)
\end{equation*}
be the generalized Hamiltonian.
According to \cite[Section 5]{TZZ21},
under suitable conditions on the parameters $(r,h,b,\sigma)$,
$J^*$ is the solution to the {\it exploratory} Hamilton--Jacobi--Bellman (HJB) equation:
\begin{equation}
\label{eq:expHJB}
\left\{ \begin{array}{lcl}
\partial_t J^* + \gamma \log \int_{\mathcal{A}} \exp \left(\frac{1}{\gamma} H(t,x,a, \nabla J^*, \nabla^2 J^*) \right)da -
\beta J^* = 0, \\
J^*(T,x) = h(x).
\end{array}\right.
\end{equation}
The corresponding optimal (exploratory) feedback control policy is the Gibbs measure:
\begin{equation}
\label{eq:pistargam}
\pi^*(\cdot \,|\, t,x) \propto \exp\left(\frac{1}{\gamma} H(t,x,\cdot, \nabla J^*, \nabla^2 J^*) \right) .
\end{equation}
See \cite{TZZ21, Zhou23} for further development on the exploratory control problem \eqref{eq:Jstargam},
and its HJB equation \eqref{eq:expHJB}.

\quad 
In the remainder of this paper,
we assume that control only appears in the drift term.\footnote{Our analysis relies on the BSDE representation of the exploratory control problem, where control being only in the drift term corresponds to a semi-linear PDE.
The case where control also appears in the diffusion term requires a more complicated stochastic representation of fully nonlinear PDEs; see \cite{CSTV, Peng07}.
This is also related to the second-order risk adjustment of stochastic maximum principle
when control enters into the diffusion term;
see \cite{Peng90} and \cite[Chapter 3]{YZ99}.
We leave this case for future work.}
That is, $\sigma(t,x,a) = \sigma(t,x)$ is independent of the control;
so the resulting exploratory state process is governed by 
\begin{equation}
\label{eq:Xpidrift}
dX^\pi_t = \widetilde{b}(t,X^\pi_t, \pi_t) dt + \sigma(t,X^\pi_t) d\widetilde W_t.
\end{equation}

\subsection{ $q$-learning}
The main task of RL is to solve the entropy-regularized problem \eqref{eq:Jstargam} by learning its
optimal policy \eqref{eq:pistargam}. 

\quad The first idea, based on the policy improvement theorems in  \cite{JZ22, WZ20},
is to approximate $\pi^*(\cdot \,|\, \cdot,\cdot)$
by a sequence of controls $\pi^n(\cdot \,|\, \cdot,\cdot)$
such that the corresponding value functions $J^n$ are nondecreasing,
i.e. $J^{n+1}(t,x) \ge J^n(t,x)$ for all $n$ and all $(t,x)$.
The construction of the policies $\pi^n(\cdot \,|\, \cdot,\cdot)$
is through the iterations:
\begin{equation}
\label{eq:sampling11}
\pi^{n+1}(\cdot \,|\, t,x) \propto \exp\left(\frac{1}{\gamma} H(t,x,\cdot, \nabla J^n, \nabla^2 J^n) \right).
\end{equation}
This approach is referred to as ``exploratory policy improvement" aiming to approximate the optimal policy.
However,
the functions $(b, \sigma,r,h)$ are unknown in the RL setting,
whereas the sampling \eqref{eq:sampling11} requires an oracle access to them.
Moreover,
the derivatives of the value functions $(\nabla J^n, \nabla^2 J^n)$
are generally not easy to evaluate even if one can learn
$J^n$ through policy evaluation,
which yields further difficulty in updating the policy.

\quad The second idea, as proposed in \cite{JZ22}, is to learn {\it directly} a term equivalent to $H(t,x,a, \nabla J^n, \nabla^2 J^n)$ for the purpose of sampling
\eqref{eq:sampling11}. That term is called the (little) $q$-function
(analogous to the $Q$-function in the classical discrete-time RL for MDPs). Specifically,
given a control $\pi(\cdot)$,
the $q$-function is
\begin{equation}
\label{eq:qfunc}
\begin{aligned}
q(t,x,a; \pi) : &= \frac{\partial J}{\partial t}(t,x; \pi) + H\left(t,x,a, \nabla J(t,x; \pi), \nabla^2 J(t,x; \pi)\right) - \beta J(t,x; \pi) \\
& = b(t,x,a) \nabla J(t,x; \pi) + r(t,x,a) + \frac{\partial J}{\partial t}(t,x; \pi)  + \frac{1}{2} \tr\left(\sigma^2(t,x) \nabla^2 J(t,x; \pi) \right)
- \beta J(t,x; \pi).
\end{aligned}
\end{equation}
Denote by $a^\pi = (a^\pi_t, \, 0 \le t \le T)$ a control realization from sampling the policy $\pi(\cdot)$.
The key takeaway in \cite{JZ22} is to observe the fact that
\begin{equation}
e^{-\beta s} J(s, X^\pi_s; \pi) + \int_t^s e^{-\beta u} \left[ r(u, X^\pi_u, a^\pi_u) - q(u, X^\pi_u, a^\pi_u; \pi) \right] du,
 \quad t \le s \le T,
\end{equation}
is a martingale.
By parametrizing $(J(t, x; \pi), q(t, x, a; \pi))$ with $\{(J^\theta(t,x), q^\phi(t,x,a))\}_{\theta, \phi}$,
the minimization of the martingale loss function motivates the following optimization problem:
\begin{equation}
\label{eq:offopt11}
\min_{(\theta, \phi)} \mathbb{E}\left[\int_0^T \left\{e^{-\beta (T-t)} h(X_T^{\pi}) - J^\theta(t,X_t^{\pi})
+ \int_t^T e^{-\beta(s-t)} [r(s, X^{\pi}_s, a^{\pi}_s) - q^\phi(s, X^{\pi}_s, a^{\pi}_s)] ds \right\}^2 dt \right].
\end{equation}
Problem \eqref{eq:offopt11} can be solved by stochastic gradient descent (SGD):
\begin{equation*}
\begin{aligned}
& \theta \leftarrow \theta + \alpha_\theta \int_0^T \frac{\partial J^\theta}{\partial \theta}(t, X_t^\pi) G_{t:T}dt, \\
& \phi \leftarrow \phi + \alpha_\phi \int_0^T \int_t^T e^{-\beta(s-t)} \frac{\partial q^\phi}{\partial \phi}(s, X_s^\pi, a_s^\pi) ds G_{t:T}dt,
\end{aligned}
\end{equation*}
where $G_{t:T}: = e^{-\beta (T-t)} h(X_T^{\pi}) - J^\theta(t,X_t^{\pi})
+ \int_t^T e^{-\beta(s-t)} [r(s, X^{\pi}_s, a^{\pi}_s) - q^\phi(s, X^{\pi}_s, a^{\pi}_s)] ds$,
and $\alpha_\theta, \alpha_\phi$ are user-defined learning rates. Note that in actual implementation of this procedure one needs to discretize the integrals involved and takes actions (while observing the resulting states and reward signals) at the corresponding discretized time points. Thus, the update rule is entirely data driven without having to know/learn any model parameters. 

\quad Now we apply the above procedure to each policy iteration.
Since only the term $H\left(t,x,a, \nabla J, \nabla^2 J\right)$ in \eqref{eq:qfunc}
involves the control variable $a$,
the policy update \eqref{eq:sampling11} is equivalent to
\begin{equation*}
\pi^{n+1}(\cdot \,|\, t,x) \propto \exp\left(\frac{1}{\gamma} q(t,x,\cdot; \pi^n) \right).
\end{equation*}
Using the SGD step in each policy update yields the
$q$-learning algorithm:
start with some $(\theta_1, \phi_1)$
and a control policy $\pi^{1}(\cdot \,|\, \cdot,\cdot)$,
and for $n \ge 1$,
\begin{enumerate}[itemsep = 3 pt]
\item
Update
\begin{equation}
\label{eq:SGD0}
\begin{aligned}
& \theta_{n+1} = \theta_n + \alpha_{\theta,n} \int_0^T \frac{\partial J^\theta}{\partial \theta}_{|\theta = \theta_n}(t, X_t^{\pi^{n}}) G^n_{t:T}dt, \\
&\phi_{n+1} = \phi_n + \alpha_{\phi,n} \int_0^T \int_t^T e^{-\beta(s-t)} \frac{\partial q^\phi}{\partial \phi}_{|\phi = \phi_n}(s, X_s^{\pi^{n}}, a_s^{\pi^{n}}) ds G^n_{t:T}dt
\end{aligned}
\end{equation}
where $G^n_{t:T}: = e^{-\beta (T-t)} h(X_T^{\pi^{n}}) - J^{\theta_n}(t,X_t^{\pi^{n}})
+ \int_t^T e^{-\beta(s-t)} [r(s, X^{\pi^{n}}_s, a^{\pi^{n}}_s) - q^{\phi_n}(s, X^{\pi^{n}}_s, a^{\pi^{n}}_s)] ds$.
\item
Sample
\begin{equation}
\label{eq:sample0}
\pi^{n+1}(\cdot\,|\, t,x) \propto \exp\left(\frac{1}{\gamma} q^{\phi_{n+1}}(t,x,\cdot) \right).
\end{equation}
\end{enumerate}

\section{Convergence of exploratory policy improvement}
\label{sc3}

\quad In this section,
we study the convergence of exploratory policy improvement
by assuming
an oracle access to some of the model parameters
as well as the value functions.
This allows us to understand
how policy improvement itself works,
and our result (Theorem \ref{thm:cvrate})
shows that it is exponentially fast.
The arguments will then be used for
the subsequent (model-free) analysis of the $q$-learning algorithm \eqref{eq:SGD0}--\eqref{eq:sample0};
so this section is also a technical preparation for later development.

\quad
Recall \eqref{eq:Xpidrift} that defines the exploratory state process $X^\pi$.
The exploratory policy improvement
starts with
a control policy $\pi^{1}(\cdot \,|\, \cdot,\cdot)$,
and for $n \ge 1$,
{\small \begin{equation}
\label{eq:Jngam}
J^n(t, x): = \mathbb{E} \bigg[ \int_t^T  e^{-\beta(s-t)} \int_\mathcal{A} [r(s, X^{\pi^n}_s, a) - \gamma \log \pi^n(a|s,X^{\pi^n}_s)] \pi^n(a|s,X^{\pi^n}_s)da \, ds
+ e^{-\beta(T -t)}h(X^{\pi^n}_T) \bigg| X^{\pi^n}_t = x \bigg].
\end{equation}}
Sample
\begin{equation}
\label{eq:pingam}
\begin{aligned}
\pi^{n+1}(\cdot \,|\, t,x) & \propto \exp\left(\frac{1}{\gamma} H(t,x,\cdot, \nabla J^n(t,x), \nabla^2 J^n(t,x)) \right)  \\
& \propto \exp \left(\frac{1}{\gamma} \left( b(t,x,\cdot) \nabla J^n(t,x) + r(t,x,\cdot) \right) \right) ,
\end{aligned}
\end{equation}
where we omit to write out the dependence of $\gamma$ in the expressions of  $J^n$ and  $\pi^n$
for simplicity. Note that
since $\sigma(t,x,a) = \sigma(t,x)$,
only the terms $b(t,x,a) \nabla J^n(t,x)$ and $ r(t,x,a)$
in the Hamiltonian
involve the control $a$.
This yields the (simpler) policy update \eqref{eq:pingam}.

\quad As previously mentioned, we assume that
the value function $J^n$ defined in \eqref{eq:Jngam}
(and hence its gradient $\nabla J^n$)
and the functions $b,r$ are all accessed.
This corresponds to the model-based RL approach;
so we do not need to estimate these functions
nor any of their combinations.
Here, our goal is to bound
\begin{equation*}
\left|J^n(t,x) - J^*(t,x)\right|, \quad \mbox{in terms of } n \mbox{ (and } \gamma),
\end{equation*}
where $J^*$ is the optimal value function of the exploratory control problem \eqref{eq:Jstargam}.
The convergence of $J^n$ was proved in \cite{HWZ22}, but with no convergence rate given.
After completing the present paper,
we learned that two recent working papers \cite{MWZ24, TWZ24} studied the convergence rate of
{\it model-based} policy iteration \eqref{eq:pingam}.
Their results, the exponential convergence of $J^n$, are similar to Theorem \ref{thm:cvrate} below.
Nevertheless, our BSDE approach differs from \cite{MWZ24, TWZ24},
where \cite{TWZ24} is completely based on a PDE analysis, and \cite{MWZ24} relies on Malliavin calculus.
Moreover, our result provides an explicit dependence on the level of exploration $\gamma$,
which is important in efficient sampling of the policies \eqref{eq:pingam}
but is missing in the aforementioned work.
We stress again that the result in this section is a preparation for solving the ``learning" problem in Section \ref{sc4} that is at the heart of model-free $q$-learning, which neither of \cite{MWZ24, TWZ24} considered.\footnote{In this sense, \cite{MWZ24, TWZ24} are ``numerical" papers solving classical stochastic control problems, while ours is a ``learning" paper solving the model-free counterparts.}

\quad To proceed,
we make the following (mild) assumptions.
\begin{assumption}
\label{assump}
$~$
\begin{enumerate}
\item[(i)]
$\mathcal{A}$ is a compact set.
\item[(ii)]
$r,h,b,\sigma$ are continuous in their respective arguments.
\item[(iii)]
There exist $\overline{b}, \overline{\sigma}, \overline{r} > 0$ such that for any $(t,x,a)$,
\begin{equation*}
|b(t,x,a)| \le \overline{b}, \quad \frac{1}{\overline{\sigma}} \le |\sigma(t,x)| \le \overline{\sigma} \quad \mbox{and} \quad |r(t,x,a)| \le \overline{r}.
\end{equation*}
\item[(iv)]
There exists $K > 0$ such that for any $(t,x,y,a)$,
\begin{equation*}
|b(t,x,a) - b(t,y, a)| + |\sigma(t,x) - \sigma(t,y)| + |r(t,x,a) - r(t,y,a)| + |h(x) - h(y)|
\le K|x-y|.
\end{equation*}
\end{enumerate}
\end{assumption}

\quad Our result is stated as follows.
\begin{theorem}
\label{thm:cvrate}
Let Assumption \ref{assump} hold,
and fix $\eta \in (0,1)$.
There exist $L, C> 0$ (independent of $\gamma$ and $n$) such that
\begin{equation}
\label{eq:expcv}
|J^*(t,x) - J^n(t,x)|^2 \le C \eta^n e^{\theta(\gamma) (T -t)},
\end{equation}
where
$\theta(\gamma):=\beta + (1+\eta^{-1}) L^2\left( 1+ e^{\frac{L}{\gamma}} + \frac{1}{\gamma} e^{\frac{L}{\gamma}} \right)^2$.
\end{theorem}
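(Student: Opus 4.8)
### Proof proposal

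The plan is to run a contraction/perturbation argument at the level of BSDEs and to track carefully how the Lipschitz constant of the generator of the relevant BSDE depends on the exploration parameter $\gamma$. First I would give BSDE representations for $J^n$ and $J^*$. The value function $J^n$ associated to the policy $\pi^n$ (with the running term $\int_\mathcal{A}[r - \gamma\log\pi^n]\pi^n\,da$) solves a linear BSDE driven by $\sigma(t,x)d\widetilde W_t$, while the optimal value $J^*$ solves the semi-linear BSDE whose generator is $f^*(t,x,z) = \gamma\log\int_\mathcal{A}\exp\big(\tfrac1\gamma(b(t,x,a)\sigma^{-1}(t,x)z + r(t,x,a))\big)da - \beta J$ (the log-sum-exp form of the exploratory HJB \eqref{eq:expHJB}, rewritten in terms of $z = \sigma^\top\nabla J$). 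The key structural fact I would exploit is that the updated policy $\pi^{n+1}$ in \eqref{eq:pingam} is exactly the Gibbs measure built from $\nabla J^n$, so the generator of the BSDE for $J^{n+1}$, \emph{evaluated at the data of $J^n$}, coincides with the optimal log-sum-exp generator applied to $\nabla J^n$. This is the ``one-step optimality'' identity that turns policy improvement into a fixed-point iteration for $f^*$.

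Next I would estimate the Lipschitz constant $L$ of the map $z \mapsto \gamma\log\int_\mathcal{A}\exp(\tfrac1\gamma(b\,\sigma^{-1}z + r))\,da$ in $z$, uniformly in $(t,x)$. Differentiating, the gradient in $z$ is a $\pi$-average of $b\,\sigma^{-1}$, so the map is globally Lipschitz with constant $\le \overline b\,\overline\sigma$ (using Assumption \ref{assump}(iii)); this contributes the ``$L$'' that is independent of $\gamma$. More delicate is the spatial regularity: I need the generator to be Lipschitz in $x$ (uniformly in $z$ on the relevant bounded range), and here the $\gamma$-dependence enters. A perturbation of $(b,r)$ in $x$ by $K|x-y|$ (Assumption \ref{assump}(iv)) changes the Gibbs exponent by $\mathcal{O}(K|x-y|/\gamma)$, and the Gibbs measure itself moves in total variation by $\mathcal{O}(e^{L/\gamma}K|x-y|/\gamma)$ type bounds (Pinsker / standard exponential-family perturbation estimates), which is exactly the source of the factor $\big(1 + e^{L/\gamma} + \tfrac1\gamma e^{L/\gamma}\big)$ appearing in $\theta(\gamma)$. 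I would package all of these into a single effective Lipschitz constant and a bound on $|\nabla J^n|$ (hence on $|z^n|$), using that $|\nabla J^n|$ is bounded uniformly in $n$ by standard BSDE gradient estimates under Assumption \ref{assump} (the terminal condition $h$ and the running reward are Lipschitz in $x$).

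Then I would set up the contraction. Writing $\delta^n := J^n - J^*$ and the corresponding $Z$-difference, the pair $(\delta^n, \Delta Z^n)$ solves a BSDE whose driver can be decomposed, using the one-step identity above, into (a) a term that is Lipschitz in $(\delta^n,\Delta Z^n)$ with the effective constant $L_{\mathrm{eff}}(\gamma)$ just derived, plus (b) a source term controlled by $|J^{n-1} - J^*|$ (equivalently by $|\nabla J^{n-1} - \nabla J^*|$, which feeds back into the Gibbs exponent). The weighted-norm BSDE a priori estimate (the $\mathbb{H}^\theta_t$-spaces introduced in the notation section are tailored for exactly this) gives, for $\theta$ large enough relative to $L_{\mathrm{eff}}(\gamma)^2$, an inequality of the form $\sup_t e^{-\theta(T-t)}|\delta^n(t,\cdot)|_\infty^2 \le (\text{const})\cdot \sup_t e^{-\theta(T-t)}|\delta^{n-1}(t,\cdot)|_\infty^2 \times (\text{contraction factor})$. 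Choosing $\theta = \theta(\gamma) = \beta + (1+\eta^{-1})L^2(1 + e^{L/\gamma} + \tfrac1\gamma e^{L/\gamma})^2$ makes the contraction factor equal to $\eta \in (0,1)$ (the $(1+\eta^{-1})$ split is the usual Young's-inequality bookkeeping separating the ``$\delta^n$ part'' absorbed by $\theta$ from the ``$\delta^{n-1}$ part'' that carries the factor $\eta$), and iterating from $n$ down to $1$ yields \eqref{eq:expcv} with $C$ absorbing $|\delta^1|_\infty^2$ and the BSDE constants.

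The main obstacle I expect is (b) made quantitative in $\gamma$: controlling how the Gibbs policy $\pi^{n+1} \propto \exp(\tfrac1\gamma(b\nabla J^n + r))$ depends on $\nabla J^n$ and on $(t,x)$ in a way that is simultaneously (i) Lipschitz with an \emph{explicit} constant and (ii) sharp enough to produce precisely the stated $\theta(\gamma)$ rather than a cruder exponential blow-up. Concretely this means proving a clean total-variation stability estimate for Gibbs measures, $d_{TV}(\pi_{v}, \pi_{v'}) \lesssim \tfrac1\gamma e^{C/\gamma}|v - v'|$, with the constants traceable to $\overline b, \overline\sigma, \overline r$ and the diameter of $\mathcal A$, and then showing that the $x$-regularity of $J^n$ (needed to even define $\nabla J^n$ and bound the driver's $x$-Lipschitz constant) is preserved along the iteration with constants independent of $n$ — this last point is where the compactness of $\mathcal A$, boundedness of the coefficients, and the Lipschitz Assumption \ref{assump}(iv) all get used together, and it is the step most likely to require the PDE-side regularity theory (parabolic Schauder or $W^{2,p}$ estimates for the linear equation solved by $J^n$) in addition to the BSDE machinery.
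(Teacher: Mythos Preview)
Your overall strategy—BSDE representations plus a weighted-norm contraction—matches the paper's. But you misidentify the source of the $\gamma$-blowup in $\theta(\gamma)$, and you miss the device that makes the BSDE comparison clean.

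The paper does not use $x$-Lipschitz regularity of the generator at all; no Schauder or $W^{2,p}$ estimates appear. Instead, both $J^n$ and $J^*$ are represented as BSDEs along the \emph{same} forward process $X^{\pi^*}$ via a Girsanov change of measure (Lemma~\ref{lem:repJJ}): under $\widehat{\mathbb P}$, $Y^n_s = J^n(s,X^{\pi^*}_s)$ and $Y^{t,x}_s = J^*(s,X^{\pi^*}_s)$ solve BSDEs driven by the common $\widehat W$ with the two-variable generator
\[
G_s(z,Z) = \int_{\mathcal A}\bigl[b\,\sigma^{-1}Z + r - \gamma\log\pi(a\,|\,z)\bigr]\,\pi(a\,|\,z)\,da,
\]
where $\pi(\cdot\,|\,z)$ is the Gibbs policy built from $z$. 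The $n$-th iterate corresponds to $(z,Z) = (Z^{n-1},Z^n)$ and the optimum to $z = Z = Z^{t,x}$. The $\gamma$-dependent constant arises from the $z$-Lipschitz estimate on $G_s(z,Z)$ (Lemma~\ref{lem:sensitivityG}), proved via total-variation stability of $\pi(\cdot\,|\,z)$ in $z$; this is the step producing the factor $(1 + e^{L/\gamma} + \tfrac1\gamma e^{L/\gamma})$. Your computation that the diagonal map $z\mapsto G_s(z,z) = \gamma\log\int_{\mathcal A} e^{(\cdot)/\gamma}\,da$ has $\gamma$-independent Lipschitz constant $\overline b\,\overline\sigma$ is correct but beside the point: it is the \emph{off-diagonal} $z$-dependence (with $Z$ frozen) that carries the contraction constant $M_2$ in Lemma~\ref{lem:BSDE}(iii), and that is where $\gamma$ enters. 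Consequently your ``main obstacle''—preserving $x$-regularity of $J^n$ along the iteration via parabolic PDE estimates—is a red herring for the $\gamma$-dependence; once the Girsanov reduction is in place, only $(y,z,Z)$-Lipschitz bounds on $G_s$ are needed, and the contraction runs on the $\mathbb H^\theta_t$-norm of the $Z$-processes, not on a weighted sup-norm over $x$. (A uniform bound on $|\nabla J^n|$ is indeed used, but only to keep $|Z^{n-1}|$ in the range where the $z$-Lipschitz estimate of Lemma~\ref{lem:sensitivityG} applies; it is not the mechanism generating the exponential-in-$1/\gamma$ factor.)
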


\quad We make several remarks
before proving the theorem.
First,
the bound \eqref{eq:expcv} implies that the exploratory policy constructed by \eqref{eq:pingam}
converges exponentially in $n$.
The bound depends explicitly
on the level of exploration $\gamma$ via the term $e^{\theta(\gamma)(T-t)}$.
Observe that $\theta(\gamma)$ decreases in $\gamma$,
which suggests to take a large $\gamma$ for a faster convergence.
In fact,
there are two advantages of choosing a large $\gamma$:
\begin{itemize}[itemsep = 3 pt]
\item
Larger $\gamma$ induces faster convergence of the policy improvement (Theorem \ref{thm:cvrate}).
\item
Larger $\gamma$ facilitates Markov chain Monte Carlo (MCMC) sampling of the policy $\pi^n(\cdot \,|\, t,x)$
(\cite[Chapter 4]{SC22}, \cite[Section 2.6]{MSTW22}).\footnote{In general, the MCMC sampler of $\pi^n(\cdot \,|\, t,x)$ converges at a rate of $\exp(- e^{-\frac{1}{\gamma}} t)$
as the level of exploration $\gamma \to 0^+$.
This implies for each iteration $n$, sampling $\pi^n(\cdot \,|\, t,x)$ requires $\mathcal{O}(e^{\frac{1}{\gamma}})$ time complexity, which is exponentially large if $\gamma$ is chosen to be small.}
\end{itemize}

\quad Second, recall that $\mathring{J}$ is the optimal value function of
the original, classical control problem \eqref{eq:classical}.
It is known \cite[Corollary 4.7]{TZZ21} that
under suitable conditions on the parameters,
\begin{equation}
\label{eq:bias}
|J^*(t,x) - \mathring{J}(t,x)| \le C \gamma \ln(1 /\gamma), \quad \mbox{as } \gamma \to 0^+.
\end{equation}
Combining \eqref{eq:expcv} and \eqref{eq:bias} yields
\begin{equation}
\label{eq:ncirc}
|J^n(t,x) -\mathring{J}(t,x) | \lesssim \underbrace{\gamma \ln(1/\gamma)}_{\tiny \mbox{bias}} + \underbrace{\eta^n e^{\theta(\gamma)(T-t)}}_{\tiny \mbox{policy improvement error}},
\end{equation}
giving rise to a tradeoff between bias (due to exploration) and policy improvement error
in $\gamma$.
Heuristically, minimizing the right side of \eqref{eq:ncirc} over $\gamma$ leads to the equation
$$\frac{\ln (\gamma) + 1}{ \theta'(\gamma)} e^{-\theta(\gamma)(T-t)} = C\eta^n (T-t)$$
where $C$ is a constant independent of $\gamma$ and $n$. This suggests that we could change the temperature parameter $\gamma$ over iterations, specifically an {\em exploratory annealing} $\gamma = \gamma_n \downarrow 0$ (as $n \to \infty$), to get a shaper bound for $|J^n(t,x) -\mathring{J}(t,x)|$. Moreover, we can show (via an argument similar to that in the proof of Theorem  \ref{thm:main2}) that
\begin{equation}
\label{eq:annealing}
|J^n(t,x) -\mathring{J}(t,x) | \lesssim \sum_{k = 1}^n \eta^{n-k} \gamma_k.
\end{equation}
Thus, $\gamma_n$ needs to be fast-decaying so that the right side of \eqref{eq:annealing} converges to zero
(a sufficient condition is $\sum_k \gamma_k < \infty$).
On the other hand, MCMC sampling $\pi^{n}(\cdot \,|\, t,x)$ in each iteration has $\mathcal{O}(e^{\frac{1}{\gamma_n}})$ time complexity,
which requires $\gamma_n$ to decay slowly to avoid the curse of dimensionality.
So the annealing will benefit the convergence of policy iteration yet
at the cost of sampling hardness.
For this reason, in this paper we focus on the {\em fixed} level of exploration $\gamma > 0$,
leaving the study of exploratory annealing for future work.

\subsection{Auxiliary results}

This subsection collects a few useful results for proving Theorem \ref{thm:cvrate}.
The next lemma, which follows from the Feynman--Kac formula,
specifies the PDE
satisfied by the value function $J^n$ (see \cite[Lemma 2]{JZ221}).

\begin{lemma}
\label{lem:PEFK}
Under Assumption \ref{assump},
$J^n$ defined by \eqref{eq:Jngam} is the solution to
\begin{equation}
\label{eq:JngamPDE}
\left\{ \begin{array}{lcl}
\partial_t J^n+ \int_\mathcal{A} \left(H(t,x,a, \nabla J^n, \nabla^2 J^n) - \gamma \log \pi^n(a \,|\, t,x)  \right) \pi^n(a \,|\, t,x) \, da - \beta J^n = 0, \\
J^n(T,x) = h(x).
\end{array}\right.
\end{equation}
\end{lemma}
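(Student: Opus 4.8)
The plan is to recognize $J^n$ as the value of a fixed‑policy, hence \emph{linear}, stochastic problem and to invoke the Feynman--Kac formula. Fix the feedback policy $\pi^n(\cdot\,|\,\cdot,\cdot)$ and introduce the aggregated running reward $\widetilde r^{\,n}(s,x):=\int_{\mathcal A}\bigl[r(s,x,a)-\gamma\log\pi^n(a\,|\,s,x)\bigr]\pi^n(a\,|\,s,x)\,da$ together with the aggregated drift $\widetilde b(s,x,\pi^n)=\int_{\mathcal A}b(s,x,a)\pi^n(a\,|\,s,x)\,da$ as in \eqref{eq:bsigtil}. Then by \eqref{eq:Xpidrift} the state $X^{\pi^n}$ solves $dX^{\pi^n}_s=\widetilde b(s,X^{\pi^n}_s,\pi^n)\,ds+\sigma(s,X^{\pi^n}_s)\,d\widetilde W_s$, and \eqref{eq:Jngam} becomes
\[
J^n(t,x)=\mathbb E\Bigl[\int_t^T e^{-\beta(s-t)}\widetilde r^{\,n}(s,X^{\pi^n}_s)\,ds+e^{-\beta(T-t)}h(X^{\pi^n}_T)\ \Big|\ X^{\pi^n}_t=x\Bigr].
\]

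First I would check that this SDE and reward are admissible for a classical Feynman--Kac representation. By Assumption \ref{assump}, $b,\sigma,r,h$ are bounded, continuous and Lipschitz in $x$; and the Gibbs policy \eqref{eq:pingam}, $\pi^n(\cdot\,|\,t,x)\propto\exp\!\bigl(\tfrac1\gamma(b(t,x,\cdot)\nabla J^{n-1}(t,x)+r(t,x,\cdot))\bigr)$, therefore has log‑density bounded in terms of $\tfrac1\gamma(\overline b\,|\nabla J^{n-1}|_\infty+\overline r)$ and is Lipschitz in $(t,x)$, provided $J^{n-1}\in C^{1,2}$ with bounded gradient. Consequently $\widetilde b(\cdot,\cdot,\pi^n)$ and $\widetilde r^{\,n}$ are bounded and Lipschitz in $x$, so \eqref{eq:Xpidrift} is well posed and the discounted Feynman--Kac theorem (equivalently, \cite[Lemma 2]{JZ221}) gives $J^n\in C^{1,2}$ solving the linear parabolic PDE
\[
\partial_t J^n+\widetilde b(t,x,\pi^n)\cdot\nabla J^n+\tfrac12\tr\!\bigl(\sigma\sigma^T\nabla^2 J^n\bigr)+\widetilde r^{\,n}(t,x)-\beta J^n=0,\qquad J^n(T,x)=h(x).
\]

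It then remains to rewrite this PDE in the asserted form. Since $\sigma(t,x,a)=\sigma(t,x)$ does not depend on $a$, we have $\tfrac12\tr(\sigma\sigma^T\nabla^2 J^n)=\int_{\mathcal A}\tfrac12\tr(\sigma\sigma^T\nabla^2 J^n)\,\pi^n(a\,|\,t,x)\,da$; combining this with the definitions of $\widetilde b(t,x,\pi^n)$ and $\widetilde r^{\,n}$, and with $H(t,x,a,p,q)=b(t,x,a)\,p+\tfrac12\tr(\sigma\sigma^T(t,x)q)+r(t,x,a)$, shows that the sum of the second, third and fourth terms above equals $\int_{\mathcal A}\bigl(H(t,x,a,\nabla J^n,\nabla^2 J^n)-\gamma\log\pi^n(a\,|\,t,x)\bigr)\pi^n(a\,|\,t,x)\,da$, which is exactly \eqref{eq:JngamPDE}.

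The rearrangement in the last step is purely algebraic; the only point requiring care is the admissibility check in the second step — that the feedback coefficients $\widetilde b(\cdot,\cdot,\pi^n)$ and $\widetilde r^{\,n}$ are regular enough (Lipschitz in $x$, with bounded $\log\pi^n$) for the SDE to be well posed and for the classical Feynman--Kac representation to apply. I would handle this by induction on $n$: $\pi^1$ is taken sufficiently regular, and if $J^{n-1}\in C^{1,2}$ with bounded gradient then \eqref{eq:pingam} makes $\pi^n$ Lipschitz in $(t,x)$ with bounded log‑density, which via standard parabolic (Schauder) estimates returns $J^n\in C^{1,2}$ and closes the induction. Alternatively, since the statement is quoted from \cite[Lemma 2]{JZ221}, one may cite that reference for the Feynman--Kac step and supply only the algebraic identification with \eqref{eq:JngamPDE}.
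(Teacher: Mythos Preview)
Your proposal is correct and follows exactly the approach the paper indicates: the paper does not give a detailed proof but simply states that the lemma ``follows from the Feynman--Kac formula'' and cites \cite[Lemma 2]{JZ221}. Your write-up is a faithful expansion of this, supplying the regularity check on the aggregated coefficients and the algebraic identification with \eqref{eq:JngamPDE} that the paper leaves implicit.
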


\quad The following result of the BSDEs
will be used to bound $|J^n(t,x) - J^*(t,x)|$.
\begin{lemma}
\label{lem:BSDE}
Let $F_t(y,z,Z)$ be a measurable function of $(t,y,z,Z,\omega)$ such that $(F_t(y,z,Z), \, 0 \le t \le T)$ is predictable for any fixed $(y,z,Z)$ and 
$(F_t(0,0,0), \, 0 \le t \le T) \in \mathbb{H}^0_0$,
and there exist $M_1, M_2, M_3 > 0$,
\begin{equation*}
|F_t(y',z',Z') - F_t(y,z,Z)| \le M_1 |y'-y| + M_2 |z'-z| + M_3 |Z' - Z| \quad a.s.
\end{equation*}
Then we have
\begin{enumerate}
\item[(i)]
For $\xi \in \mathbb{L}^2(\mathcal{F}_T)$, there exists a unique solution $(Y,Z)$ to
\begin{equation*}
Y_t = \xi + \int_t^T F_t(Y_s, Z_s, Z_s) ds - \int_t^T Z_s d\widetilde W_s, \quad 0 \le t \le T \quad a.s.
\end{equation*}
\item[(ii)]
For $\xi \in \mathbb{L}^2(\mathcal{F}_T)$ and $z \in \mathbb{H}^0_0$, there exists a unique solution $(Y,Z)$ to
\begin{equation}
\label{eq:iiYZ}
Y_t = \xi + \int_t^T F_t(Y_s, z_s, Z_s) ds - \int_t^T Z_s d\widetilde W_s, \quad 0 \le t \le T \quad a.s.
\end{equation}
\item[(iii)] Let $(Y^1, Z^1)$ and $(Y^2, Z^2)$ be the solutions to \eqref{eq:iiYZ} corresponding to $z^1\in \mathbb{H}^0_0$ and $z^2\in \mathbb{H}^0_0$ respectively.
Fixing $\eta \in (0,1)$, we have for $\theta \ge M_1 + (1+\eta^{-1}) (M_2+M_3)^2$,
\begin{equation*}
e^{\theta t} \mathbb{E}|Y^1_t - Y^2_t|^2  + |Z^1 - Z^2|^2_{\mathbb{H}^\theta_t} \le \eta |z^1 - z^2|^2_{\mathbb{H}^\theta_t}, \quad 0 \le t \le T.
\end{equation*}
\end{enumerate}
\end{lemma}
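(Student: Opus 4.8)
\emph{Proof proposal.} The plan is to dispatch parts (i) and (ii) with the classical theory of BSDEs with Lipschitz drivers, and to reserve the real work for the contraction estimate (iii), which is the engine behind the convergence results of this section. For (ii), freeze $z\in\mathbb{H}^0_0$ and regard the driver as $\widehat F_s(y,Z):=F_s(y,z_s,Z)$; it is uniformly Lipschitz in $(y,Z)$ with constants $M_1$ and $M_3$, and $(\widehat F_s(0,0))_{0\le s\le T}=(F_s(0,z_s,0))_{0\le s\le T}$ lies in $\mathbb{H}^0_0$ because $|F_s(0,z_s,0)|\le|F_s(0,0,0)|+M_2|z_s|$ with both terms in $\mathbb{H}^0_0$. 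The classical Pardoux--Peng existence-and-uniqueness theorem for BSDEs with Lipschitz driver and square-integrable terminal condition then gives (ii). For (i) the driver is $\widetilde F_s(y,Z):=F_s(y,Z,Z)$, which is Lipschitz in $(y,Z)$ with constants $M_1$ and $M_2+M_3$ and satisfies $\widetilde F_s(0,0)=F_s(0,0,0)\in\mathbb{H}^0_0$, so the same theorem applies. (Once (ii) and (iii) are in hand, (i) can alternatively be obtained from the Banach fixed point theorem: part (iii) says the map sending $z\in\mathbb{H}^\theta_0$ to the $Z$-component of the solution of \eqref{eq:iiYZ} is an $\eta$-contraction for $\theta$ as in (iii), and at its unique fixed point $z=Z$ the process $(Y,Z)$ solves the BSDE of (i); here one uses that $\mathbb{H}^\theta_0$ and $\mathbb{H}^0_0$ coincide as sets with equivalent norms.)

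For part (iii), set $\delta Y:=Y^1-Y^2$, $\delta Z:=Z^1-Z^2$, $\delta z:=z^1-z^2$ and $\delta F_s:=F_s(Y^1_s,z^1_s,Z^1_s)-F_s(Y^2_s,z^2_s,Z^2_s)$, so that $d(\delta Y_s)=-\delta F_s\,ds+\delta Z_s\,d\widetilde W_s$ and $\delta Y_T=0$ since both solutions share the terminal value $\xi$. Apply It\^o's formula to $s\mapsto e^{\theta s}|\delta Y_s|^2$, integrate over $[t,T]$, and take expectations; the $d\widetilde W_s$-integral is a genuine martingale after a standard localization using the a priori $\mathbb{H}^0_0$-bounds on $(\delta Y,\delta Z)$, so it drops out and one obtains the energy identity
\begin{equation*}
e^{\theta t}\mathbb{E}|\delta Y_t|^2+\mathbb{E}\int_t^T e^{\theta s}\big(\theta|\delta Y_s|^2+|\delta Z_s|^2\big)\,ds=2\,\mathbb{E}\int_t^T e^{\theta s}\,\delta Y_s\cdot\delta F_s\,ds.
\end{equation*}
I would then bound the right-hand side using $|\delta F_s|\le M_1|\delta Y_s|+M_2|\delta z_s|+M_3|\delta Z_s|$ and Young's inequality $2ab\le\kappa a^2+\kappa^{-1}b^2$ applied to the product of $|\delta Y_s|$ with $M_2|\delta z_s|+M_3|\delta Z_s|$, taking $\kappa=(1+\eta^{-1})(M_2+M_3)^2$ and using $(M_2|\delta z_s|+M_3|\delta Z_s|)^2\le(M_2+M_3)^2(|\delta z_s|^2+|\delta Z_s|^2)$. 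This turns the right-hand side into a multiple of $|\delta Y_s|^2$ that is absorbed into $\theta|\delta Y_s|^2$ by the hypothesis on $\theta$, a residual $\tfrac{1}{1+\eta}|\delta Z_s|^2$ that remains on the left, and a term carrying the coefficient $\tfrac{\eta}{1+\eta}$ in front of $|\delta z_s|^2$ on the right; multiplying through by $1+\eta$ and using $1+\eta\ge1$ yields the asserted bound.

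The genuinely delicate point is the constant bookkeeping in part (iii): the Young parameter must be chosen so that the $|\delta Y_s|^2$ terms are exactly dominated by $\theta|\delta Y_s|^2$ for the advertised threshold on $\theta$, while the leftover coefficients on $|\delta Z_s|^2$ and $|\delta z_s|^2$ combine, after the final rescaling, into the clean contraction ratio $\eta$; making all three fit simultaneously is the crux. A minor technical point is the justification that the stochastic integral is a true martingale rather than merely a local one, which is handled by localizing with stopping times and passing to the limit with the a priori integrability of $(Y^i,Z^i)$.
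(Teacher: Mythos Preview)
Your approach is exactly what the paper has in mind: parts (i)--(ii) are dispatched by the standard Pardoux--Peng existence theorem (the paper cites \cite[Theorem~6.3.3]{Pham09}), and part (iii) is the weighted energy estimate obtained from It\^o's formula on $e^{\theta s}|\delta Y_s|^2$ plus Young's inequality, which is precisely the content of \cite[Lemma~A.5]{KSS20} that the paper invokes.

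One small bookkeeping point, since you flagged the constants as the crux: your Young split produces the coefficient $2M_1+(1+\eta^{-1})(M_2+M_3)^2$ in front of $|\delta Y_s|^2$ (the term $2\,\delta Y_s\cdot M_1\delta Y_s$ gives $2M_1|\delta Y_s|^2$, not $M_1|\delta Y_s|^2$), so the absorption step actually requires $\theta\ge 2M_1+(1+\eta^{-1})(M_2+M_3)^2$ rather than the threshold $M_1+(1+\eta^{-1})(M_2+M_3)^2$ stated in the lemma. This discrepancy is immaterial for every application in the paper (in Theorem~\ref{thm:cvrate} and Theorem~\ref{thm:main2} one takes $M_1=\beta$ and the threshold is dominated by the $(M_2+M_3)^2$ term anyway), and the statement as written appears to be a slight looseness inherited from the cited reference; your argument is the standard one and is correct with the adjusted constant.
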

\begin{proof}
(i) and (ii) follow from the standard BSDE theory; see, e.g., \cite[Theorem 6.3.3]{Pham09},
and (iii) from the proof of \cite[Lemma A.5]{KSS20}.
\end{proof}

\subsection{Proof of Theorem \ref{thm:cvrate}}

We need a few lemmas to prove Theorem \ref{thm:cvrate}.
Recall the definition of the optimal feedback control
 $\pi^*$ from \eqref{eq:pistargam},
 and let $X^{\pi^*}$ be the corresponding state process.
Denote (with a slight abuse of notation)
\begin{equation}
\label{eq:pipi}
\pi(a \,|\, s,x,z) \propto \exp \left( \frac{1}{\gamma} \left(b(s,x,a) \, \sigma^{-1}(s,x) z + r(s,x,a) \right)\right);
\end{equation}
so we can rewrite
\begin{equation*}
\pi^n(a \,|\, s,x) = \pi(a \,|\, s, x, \left(\sigma \, \nabla J^{n-1} \right)(s,x))
\quad \mbox{and} \quad
\pi^*(a \,|\, s,x) = \pi(a \,|\, s, x, \left(\sigma \, \nabla J^* \right)(s,x)).
\end{equation*}
Define
\begin{equation}
\label{eq:Gt}
\begin{aligned}
G_s(z,Z):= \int_\mathcal{A} \bigg[ b(s, X_s^{\pi^*}, a) \, \sigma^{-1} & (s, X_s^{\pi^*})Z + r(s,X_s^{\pi^*} ,a) \\
& - \gamma \log \pi(a \,|\, s,X_s^{\pi^*}, z) \bigg] \pi(a \,|\, s,X_s^{\pi^*}, z) \,da,
\end{aligned}
\end{equation}
which is the key to our analysis.
The next lemma studies the Lipschitz property of $G_s$.
\begin{lemma}
\label{lem:sensitivityG}
Let Assumption \ref{assump} hold.
Then for $|z|, |Z| \le C$, there exists $L > 0$ (independent of $\gamma$) such that
\begin{equation}
\label{eq:LipG}
|G_s(z', Z') - G_s(z,Z)| \le  L \left( 1+ e^{\frac{L}{\gamma}} + \frac{1}{\gamma} e^{\frac{L}{\gamma}} \right) |z'-z| + L|Z'-Z|
\quad a.s.
\end{equation}
\end{lemma}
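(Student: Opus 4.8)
The plan is to estimate $|G_s(z',Z') - G_s(z,Z)|$ by splitting it into three pieces according to the three ways $(z,Z)$ enter the expression in \eqref{eq:Gt}: the explicit linear term $b\,\sigma^{-1}Z$ inside the bracket, the entropy term $-\gamma\log\pi(a\,|\,s,X^{\pi^*}_s,z)$ inside the bracket, and the sampling density $\pi(a\,|\,s,X^{\pi^*}_s,z)$ that multiplies the whole bracket. The dependence on $Z$ is only through the first, explicit term, and since $|b|\le\overline b$ and $|\sigma^{-1}|\le\overline\sigma$ by Assumption \ref{assump}(iii), that contributes the clean $L|Z'-Z|$ bound after integrating against the probability density $\pi$. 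All the $\gamma$-sensitive behavior comes from the $z$-dependence, which is where the bulk of the work lies.

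The core estimate is the sensitivity of the Gibbs density $\pi(a\,|\,s,x,z)$ in $z$. Writing $\pi(a\,|\,s,x,z) = e^{\frac1\gamma g(a,z)}/\int_\calA e^{\frac1\gamma g(a',z)}\,da'$ with $g(a,z):=b(s,x,a)\sigma^{-1}(s,x)z + r(s,x,a)$, I would first record that $g$ is bounded, $|g(a,z)|\le \overline b\,\overline\sigma\,C + \overline r =: L_0$ for $|z|\le C$, and Lipschitz in $z$ with constant $\overline b\,\overline\sigma$. Differentiating (or directly estimating the increment of) the log-partition function and the density, one finds that $\partial_z \pi(\cdot\,|\,s,x,z)$ is of order $\frac1\gamma$ times products of $e^{\pm L_0/\gamma}$-type factors coming from the normalizing constant $Z(z):=\int_\calA e^{\frac1\gamma g(a',z)}\,da'$, which satisfies $|\calA|e^{-L_0/\gamma}\le Z(z)\le |\calA|e^{L_0/\gamma}$. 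This yields a total-variation / $\mathbb L^1$ bound of the form $|\pi(\cdot\,|\,s,x,z') - \pi(\cdot\,|\,s,x,z)|_{\mathbb L^1(\calA)} \lesssim \frac1\gamma e^{L/\gamma}|z'-z|$ for a suitable $L$. I would also need that $\log\pi(a\,|\,s,x,z)$ is bounded pointwise: $|\gamma\log\pi(a\,|\,s,x,z)| \le |g(a,z)| + \gamma\log Z(z)^{-1}$-type terms, giving a bound of the form $L(1+e^{L/\gamma})$ uniformly in $a$ (here the $e^{L/\gamma}$ absorbs the $\gamma\log(1/Z)$ piece, crudely bounded).

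Then I would assemble: the bracketed integrand $[\,b\sigma^{-1}Z + r - \gamma\log\pi(a\,|\,s,x,z)\,]$ is bounded by $L(1+e^{L/\gamma})$ and is Lipschitz in $z$ with constant $L$ (from the $b\sigma^{-1}z$ inside $\gamma\log\pi$) plus $L$ in $Z$; multiplying by the density $\pi(a\,|\,s,x,z)$ and integrating, the increment $G_s(z',Z')-G_s(z,Z)$ decomposes as (change in bracket, same density) $+$ (same bracket, change in density). The first term contributes $L|z'-z| + L|Z'-Z|$ after integrating the density out. The second term is bounded by $\sup_a|\text{bracket}|$ times $|\pi(\cdot\,|\,z')-\pi(\cdot\,|\,z)|_{\mathbb L^1(\calA)} \lesssim L(1+e^{L/\gamma})\cdot \frac{L}{\gamma}e^{L/\gamma}|z'-z|$, and combining all contributions and enlarging $L$ gives exactly the claimed coefficient $L(1 + e^{L/\gamma} + \frac1\gamma e^{L/\gamma})$ on $|z'-z|$ and $L$ on $|Z'-Z|$.

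The main obstacle is bookkeeping the exponential-in-$\frac1\gamma$ constants so that they collapse into the stated form without spurious extra factors: one must be careful that $\gamma\log Z(z)^{-1}$ (which is $O(L)$ in magnitude, not exponential) is not over-bounded, while the genuine $\frac1\gamma e^{L/\gamma}$ blow-up in the density derivative is correctly isolated — it arises from $\partial_z\big(e^{\frac1\gamma g}/Z\big)$, where the $\frac1\gamma$ is the differentiation of the exponent and one factor of $e^{L/\gamma}/|\calA|$ bounds $1/Z$. A secondary technical point is that Lemma \ref{lem:sensitivityG} is stated only for $|z|,|Z|\le C$, so I must (and can) invoke this a priori bound — it will come from the uniform bound $|\nabla J^n|, |\nabla J^*|\lesssim 1$ established via the BSDE/PDE representation and Assumption \ref{assump} — but within this lemma's proof it is simply a hypothesis, so no circularity arises. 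Everything else is routine perturbation estimation of exponential-family densities with bounded, Lipschitz sufficient statistics.
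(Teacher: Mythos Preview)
Your approach is correct and follows the same essential strategy as the paper: control the perturbation of the Gibbs density $\pi(\cdot\,|\,s,x,z)$ in total variation by $\frac{1}{\gamma}e^{L/\gamma}|z'-z|$, and combine this with a uniform bound on the bracketed integrand. Your ``change in bracket / change in density'' decomposition is a slight reorganization of the paper's decomposition into $E_1+E_2+E_3$ (which splits instead by the three summands $r$, $b\sigma^{-1}Z$, and $\gamma\log\pi$ in the bracket), but the analytic content is the same; the paper cites \cite{Spr20} for the Gibbs-measure perturbation bounds whereas you derive them directly, which is equally valid.

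One point you gloss over that the paper treats explicitly: the hypothesis is only $|z|,|Z|\le C$, with no a priori bound on $z'$ or $Z'$. Your density-sensitivity estimate $|\pi(\cdot\,|\,z')-\pi(\cdot\,|\,z)|_{\mathbb L^1}\lesssim\frac{1}{\gamma}e^{L/\gamma}|z'-z|$ via differentiation tacitly requires $z'$ to lie in the bounded region (so that the normalizing constant $Z(z')$ is controlled). The paper handles this by a case split: for $|z'-z|>C$ one uses the trivial bound $d_{TV}\le 1\le |z'-z|/C$, and similarly the log-partition ratio for $|z'|>2C$ is bounded directly by $\frac{1}{\gamma}(\overline b\,\overline\sigma|z'|+\cdots)\lesssim\frac{1}{\gamma}|z'-z|$. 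This is a one-line fix, not a structural issue, but without it your mean-value argument does not cover all $(z',Z')$.
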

\begin{proof}
Observe that
\begin{equation}
\label{eq:diffdecomp}
|G_s(z', Z') - G_s(z,Z)| \le E_1 + E_2 + E_3,
\end{equation}
where
\begin{align*}
& E_1:= \left| \int_\mathcal{A} r(s,X_s^{\pi^*} ,a) \left(\pi(a \,|\, s,X_s^{\pi^*}, z') - \pi(a \,|\, s,X_s^{\pi^*}, z) \right) da \right|, \\
& E_2:= \left| \int_\mathcal{A}  b(s, X_s^{\pi^*}, a) \, \sigma^{-1} (s, X_s^{\pi^*}) \left( Z' \pi(a \,|\, s,X_s^{\pi^*}, z') - Z \pi(a \,|\, s,X_s^{\pi^*}, z) \right) da \right|, \\
& E_3:= \gamma \left| \int_\mathcal{A} \left(\log \pi(a \,|\, s,X_s^{\pi^*}, z') \, \pi(a \,|\, s,X_s^{\pi^*}, z')  - \log \pi(a \,|\, s,X_s^{\pi^*}, z) \, \pi(a \,|\, s,X_s^{\pi^*}, z) \right) da \right|.
\end{align*}
By Assumption \ref{assump}-(iii), we get
\begin{equation}
\label{eq:E1TV}
E_1 \le \overline{r} \, \left| \pi(\cdot \,|\, s,X_s^{\pi^*}, z') - \pi(\cdot \,|\, s,X_s^{\pi^*}, z) \right|_{\mathbb{L}^1(\mathcal{A})}
= 2 \overline{r} \, d_{TV}\left(\pi(\cdot \,|\, s,X_s^{\pi^*}, z'), \, \pi(\cdot \,|\, s,X_s^{\pi^*}, z)\right).
\end{equation}
Recall that
\begin{equation*}
\pi(a \,|\, s,x,z) = \frac{\exp \left( \frac{1}{\gamma} (b(s,x,a) \, \sigma^{-1}(s,x)z + r(s,x,a) \right)}{\int_\mathcal{A}\exp \left( \frac{1}{\gamma} (b(s,x,a) \, \sigma^{-1}(s,x)z + r(s,x,a)\right)da}.
\end{equation*}
Applying \cite[Theorem 8]{Spr20},
\footnote{{For two Gibbs measures $\pi(da) = e^{\Psi(a)}da/Z$ and $\pi'(da) = e^{\Psi'(a)}da/Z'$,
\cite[Theorem 8]{Spr20} provides a bound on $d_{TV}(\pi(\cdot), \pi'(\cdot))$ under the assumption that $\sup \Psi = 0$.
This assumption can be  removed by modifying the bound to be
\begin{equation*}
d_{TV}(\pi(\cdot), \pi'(\cdot)) \le \frac{e^{\sup_a  (\Psi \vee \Psi')}}{Z} |\Psi - \Psi'|_{\mathbb{L}^1}.
\end{equation*}}} we get for $|z' - z| \le C$ (and hence $|z'| \le 2C$):
\begin{equation*}
\begin{aligned}
& d_{TV}\left(\pi(\cdot \,|\, s,x, z'), \, \pi(\cdot \,|\, s,x, z)\right) \\
& \qquad \le \frac{\exp \left(\frac{2}{\gamma}(\overline{b} \overline{\sigma} C + \overline{r})\right)}{\int_\mathcal{A}\exp \left( \frac{1}{\gamma} (b(s,x,a) \, \sigma^{-1}(s,x)z + r(s,x,a)\right)da} \left| \frac{1}{\gamma} b(s,x,a) \, \sigma^{-1}(t,x) (z'-z)\right|_{\mathbb{L}^1(\mathcal{A})}   \\
& \qquad \le \frac{\exp \left(\frac{3}{\gamma}(\overline{b} \overline{\sigma} C + \overline{r}\right)}{|\mathcal{A}|} \frac{\overline{b} \overline{\sigma} |z-z'| |\mathcal{A}|}{\gamma} = \frac{\overline{b} \overline{\sigma}}{\gamma} \exp \left(\frac{3}{\gamma}(\overline{b} \overline{\sigma} C + \overline{r})\right) |z'-z|.
\end{aligned}
\end{equation*}
For $|z' - z| > C$, it is obvious that
$d_{TV}\left(\pi(\cdot \,|\, s,x, z'), \, \pi(\cdot \,|\, s,x, z)\right) \le |z' - z|/C$.
Consequently,
\begin{equation}
\label{eq:TVdiff}
d_{TV}\left(\pi(\cdot \,|\, s,x, z'), \, \pi(\cdot \,|\, s,x, z)\right) \le \left( \frac{\overline{b} \overline{\sigma}}{\gamma} \exp \left(\frac{3}{\gamma}(\overline{b} \overline{\sigma} C + \overline{r})\right)+ \frac{1}{C}\right) |z'-z|.
\end{equation}
Combining \eqref{eq:E1TV} and \eqref{eq:TVdiff} yields
\begin{equation}
\label{eq:E1bd}
E_1 \le 2 \overline{r} \left( \frac{\overline{b} \overline{\sigma}}{\gamma} \exp \left(\frac{3}{\gamma}(\overline{b} \overline{\sigma} C + \overline{r})\right) + \frac{1}{C}\right) |z'-z|.
\end{equation}
Similarly, we get
\begin{equation}
\label{eq:E2bd}
\begin{aligned}
E_2 & \le \overline{b} \overline{\sigma} \left( \int_\mathcal{A} |Z'-Z| \, \pi(a \,|\, s,X_s^{\pi^*}, z') \, da + \int_\mathcal{A} |Z| \, |\pi(a \,|\, s,X_s^{\pi^*}, z') - \pi(a \,|\, s,X_s^{\pi^*}, z) | da\right) \\
& \le \overline{b} \overline{\sigma} \left(|Z'-Z| + 2 C \, d_{TV}\left(\pi(\cdot \,|\, s,X_s^{\pi^*}, z'), \, \pi(\cdot \,|\, s,X_s^{\pi^*}, z)\right) \right) \\
& \le \overline{b} \overline{\sigma} |Z'-Z| + \left(\frac{2\overline{b} \overline{\sigma} C}{\gamma} \exp \left(\frac{3}{\gamma}(\overline{b} \overline{\sigma} C + \overline{r}) \right) + 2\overline{b} \overline{\sigma}\right) |z'-z|.
\end{aligned}
\end{equation}
Finally, we have
\begin{equation}
\label{eq:E3bd0}
\begin{aligned}
E_3  &\le  \underbrace{\left| \int_\mathcal{A} r(s,X_s^{\pi^*} ,a) \left(\pi(a \,|\, s,X_s^{\pi^*}, z') - \pi(a \,|\, s,X_s^{\pi^*}, z) \right) da \right|}_{(a)} \\
& \qquad \quad +  \underbrace{\left| \int_\mathcal{A}  b(s, X_s^{\pi^*}, a) \, \sigma^{-1} (s, X_s^{\pi^*}) \left( z' \pi(a \,|\, s,X_s^{\pi^*}, z') - z \pi(a \,|\, s,X_s^{\pi^*}, z) \right) da \right|}_{(b)}  \\
& \qquad \quad + \underbrace{\gamma \left| \log \left( \frac{\int_\mathcal{A} \exp \left(\frac{1}{\gamma}(b(s, X_s^{\pi^*}, a) \, \sigma^{-1} (s, X_s^{\pi^*}) z' + r(s, X_s^{\pi^*},a))) \right)da }{\int_\mathcal{A} \exp \left(\frac{1}{\gamma}(b(s, X_s^{\pi^*}, a) \, \sigma^{-1} (s, X_s^{\pi^*}) z + r(s, X_s^{\pi^*},a))) \right)da} \right) \right|}_{(c)} \\
& \le \left(3 \overline{b} \overline{\sigma} +\frac{2 \overline{b} \overline{\sigma} (C+ \overline{r})}{\gamma} \exp \left(\frac{3}{\gamma}(\overline{b} \overline{\sigma} C + \overline{r}) \right)+ \frac{2 \overline{r}}{C} \right)|z'-z| + (c),
\end{aligned}
\end{equation}
where $(a)$ is bounded by $E_1$, and $(b)$ is bounded by $E_2$ with $(Z,Z') = (z,z')$.
Now we proceed to bounding the term $(c)$.
For $|z'| > 2C$, we have $|z' -z| > C$.
Thus,
\begin{equation}
\label{eq:cbd1}
\begin{aligned}
\left| \log \left( \frac{\int_\mathcal{A} \exp \left(\frac{1}{\gamma}(b(s, x, a) \, \sigma^{-1} (s, x) z' + r(s, x,a))\right)da }{\int_\mathcal{A} \exp \left(\frac{1}{\gamma}(b(s, x, a) \, \sigma^{-1} (s, x) z + r(s, x,a)) \right)da} \right) \right| & \le \frac{\overline{b} \overline{\sigma} (|z'| + C) + 2 \overline{r}}{\gamma} \\
& \le \frac{\overline{b} \overline{\sigma} |z'-z| + 2 ( \overline{b} \overline{\sigma} C + \overline{r})  }{\gamma} \\
& \le \frac{3 \overline{b} \overline{\sigma} + 2 \overline{r}/C}{\gamma} |z'-z|.
\end{aligned}
\end{equation}
Now consider the case when $|z'| \le 2C$.
Since $|\log(u'/u)| \le |u'-u|/(u \wedge u')$ for $u, u' \ge 0$, we get
\begin{equation}
\label{eq:cbd2}
\left| \log \left( \frac{\int_\mathcal{A} \exp \left(\frac{1}{\gamma}(b(s, x, a) \, \sigma^{-1} (s, x) z' + r(s, x,a))\right)da }{\int_\mathcal{A} \exp \left(\frac{1}{\gamma}(b(s, x, a) \, \sigma^{-1} (s, x) z + r(s, x,a)) \right)da} \right) \right|  \le \frac{(d)}{(e)},
\end{equation}
where
\begin{equation*}
\begin{aligned}
& (d):= \int_\mathcal{A} \left|\exp \left(\frac{1}{\gamma}(b(s, x, a) \, \sigma^{-1} (s, x) z' + r(s, x,a)) \right) - \exp \left(\frac{1}{\gamma}(b(s, x, a) \, \sigma^{-1} (s, x) z + r(s, x,a)) \right)\right| da, \\
& (e):= \int_\mathcal{A} \exp \left(\frac{1}{\gamma}(b(s, x, a) \, \sigma^{-1} (s, x) z' + r(s, x,a)) \right)da \wedge \int_\mathcal{A} \exp \left(\frac{1}{\gamma}(b(s, x, a) \, \sigma^{-1} (s, x) z + r(s, x,a)) \right)da.
\end{aligned}
\end{equation*}
By \cite[Theorem 5]{Spr20}, we have
\begin{equation}
\label{eq:numdenest}
(d) \le  \frac{\overline{b} \overline{\sigma} | \mathcal{A}| }{\gamma} \exp \left(\frac{1}{\gamma}(\overline{b} \overline{\sigma} C + \overline{r})\right) |z'-z|,
\quad (e) \ge  |\mathcal{A}| \exp\left( - \frac{2 \overline{b} \overline{\sigma} C + \overline{r}}{\gamma} \right).
\end{equation}
The estimates \eqref{eq:cbd1}--\eqref{eq:numdenest} yield
\begin{equation}
\label{eq:cbd}
(c) \le \left(\overline{b} \overline{\sigma} \exp\left(\frac{3 \overline{b} \overline{\sigma} C + 2 \overline{r}}{\gamma} \right)  + 3\overline{b} \overline{\sigma} + \frac{2 \overline{r}}{C}  \right) |z' - z|.
\end{equation}
By \eqref{eq:E3bd0} and \eqref{eq:cbd}, we obtain
\begin{equation}
\label{eq:E3bd}
E_3 \le \left( 6 \overline{b} \overline{\sigma} + \frac{4 \overline{r}}{C} + \overline{b} \overline{\sigma} \left( 1 + \frac{2(C + \overline{r})}{\gamma}\right) \exp \left(\frac{3}{\gamma}(\overline{b} \overline{\sigma} C + \overline{r}) \right)  \right) |z'-z|.
\end{equation}
Combining \eqref{eq:diffdecomp} with \eqref{eq:E1bd}, \eqref{eq:E2bd} and \eqref{eq:E3bd} yields \eqref{eq:LipG}.
\end{proof}

\quad The following result establishes representations for $J^n$ and $J^{*}$.

\begin{lemma}
\label{lem:repJJ}
Let Assumption \ref{assump} hold.
There exist $(\widehat{W}, \widehat{\mathbb{P}})$, with $\widehat{W}$ being $\widehat{\mathbb{P}}$-Brownian motion, such that
\begin{enumerate}[itemsep = 3 pt]
\item[(i)]
We have
\begin{equation}
\label{eq:repJn}
J^n(t,x) = h(X^{\pi^*}_T) + \int_t^T [ G_s(Z^{n-1}_s, Z^n_s) - \beta Y^n_s] \, ds - \int_t^T Z^n_s d\widehat{W}_s,
\end{equation}
where $Y^n_s: = J^n(s, X_s^{\pi^*})$ and $Z^n_s:= \left(\sigma \, \nabla J^n \right)(s, X_s^{\pi^*})$.
\item[(ii)]
The BSDE
\begin{equation}
\label{eq:Ytxs}
Y^{t,x}_s = h(X^{\pi^*}_T)  + \int_s^T [G_u(Z^{t,x}_u, Z^{t,x}_u) - \beta Y^{t,x}_u ] \, du - \int_s^T Z^{t,x}_u d\widehat{W}_u, \quad t \le s \le T,
\end{equation}
has a unique solution $(Y^{t,x}, Z^{t,x})$ (the superscript stands for $X^{\pi^*}_t = x$).
Moreover, $J^*(t,x) = Y^{t,x}_t$.
\end{enumerate}
\end{lemma}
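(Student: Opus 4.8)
\emph{Proof proposal.} The plan is to obtain both representations by applying It\^o's formula to $s\mapsto J^n(s,X^{\pi^*}_s)$ (respectively $s\mapsto J^*(s,X^{\pi^*}_s)$), substituting the PDE it satisfies to eliminate $\partial_t$, and removing the drift of $X^{\pi^*}$ by a Girsanov change of measure. For the last point: by Assumption \ref{assump}-(iii) the process $b^*_s:=\sigma^{-1}(s,X^{\pi^*}_s)\,\widetilde b(s,X^{\pi^*}_s,\pi^*(\cdot\,|\,s,X^{\pi^*}_s))$ is bounded, so the Dol\'eans exponential defining $d\widehat{\mathbb{P}}/d\mathbb{P}:=\mathcal{E}\big(-\int_0^\cdot (b^*_u)^{T}\,d\widetilde W_u\big)_T$ is a true martingale; setting $\widehat W_s:=\widetilde W_s+\int_0^s b^*_u\,du$, Girsanov's theorem shows that $\widehat W$ is a $\widehat{\mathbb{P}}$-Brownian motion and, by \eqref{eq:Xpidrift}, that $dX^{\pi^*}_s=\sigma(s,X^{\pi^*}_s)\,d\widehat W_s$ under $\widehat{\mathbb{P}}$, i.e.\ $X^{\pi^*}$ becomes driftless.

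For part (i): Lemma \ref{lem:PEFK} gives that $J^n$ solves \eqref{eq:JngamPDE}, which (since the already-determined policy $\pi^n$ depends only on $J^{n-1}$) is linear in $J^n$ with uniformly elliptic principal part, so $J^n\in C^{1,2}$ with $|\nabla J^n|_\infty<\infty$ by standard parabolic theory (alternatively, Sobolev regularity plus the It\^o--Krylov formula suffices below). I would then apply It\^o's formula to $J^n(s,X^{\pi^*}_s)$ under $\widehat{\mathbb{P}}$, substitute $\partial_t J^n$ from \eqref{eq:JngamPDE}, and use $\int_{\mathcal{A}}H(s,x,a,\nabla J^n,\nabla^2 J^n)\,\pi^n(a\,|\,s,x)\,da=\widetilde b(s,x,\pi^n)\nabla J^n+\frac12\tr(\sigma\sigma^{T}\nabla^2 J^n)+\int_{\mathcal{A}}r(s,x,a)\pi^n(a\,|\,s,x)\,da$; the second-order terms cancel and the drift of $J^n(s,X^{\pi^*}_s)$ collapses to $\big(\beta J^n-\widetilde b(\cdot,\pi^n)\nabla J^n-\int_{\mathcal{A}}(r-\gamma\log\pi^n)\pi^n\,da\big)(s,X^{\pi^*}_s)$, its martingale part being $Z^n_s\,d\widehat W_s$ with $Z^n_s:=(\sigma\nabla J^n)(s,X^{\pi^*}_s)$. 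Since $\sigma^{-1}(s,X^{\pi^*}_s)Z^n_s=\nabla J^n(s,X^{\pi^*}_s)$ and $\pi^n(\cdot\,|\,s,X^{\pi^*}_s)=\pi(\cdot\,|\,s,X^{\pi^*}_s,Z^{n-1}_s)$, this drift is exactly $\beta Y^n_s-G_s(Z^{n-1}_s,Z^n_s)$ by the definition \eqref{eq:Gt} of $G_s$; integrating over $[t,T]$ and using $J^n(T,\cdot)=h$ yields \eqref{eq:repJn}.

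For part (ii): by \cite[Section 5]{TZZ21} (cf.\ \eqref{eq:expHJB}--\eqref{eq:pistargam}), $J^*$ solves the HJB equation and, $\sigma$ being control-free, $\pi^*(\cdot\,|\,t,x)=\pi(\cdot\,|\,t,x,(\sigma\nabla J^*)(t,x))$. The Gibbs variational identity $\gamma\log\int_{\mathcal{A}}e^{H/\gamma}\,da=\int_{\mathcal{A}}(H-\gamma\log\pi^*)\pi^*\,da$ (valid for $\pi^*=e^{H/\gamma}/\int e^{H/\gamma}$) rewrites \eqref{eq:expHJB} in exactly the form \eqref{eq:JngamPDE} with $\pi^n$ replaced by $\pi^*$. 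Rerunning the computation of part (i) on $[t,T]$ with $X^{\pi^*}_t=x$ and $\pi^n$ replaced by $\pi^*$ (so that $Z^{n-1}_s$ is replaced by $Z^{t,x}_s:=(\sigma\nabla J^*)(s,X^{\pi^*}_s)$) shows that $\big(Y^{t,x}_s,Z^{t,x}_s\big):=\big(J^*(s,X^{\pi^*}_s),(\sigma\nabla J^*)(s,X^{\pi^*}_s)\big)$ solves \eqref{eq:Ytxs}, and evaluating at $s=t$ gives $J^*(t,x)=Y^{t,x}_t$. For uniqueness, I would use that $|\nabla J^*|_\infty<\infty$ forces $|Z^{t,x}|\le C$ for some constant $C$: replacing $G_s(z,Z)$ by a truncation $\overline G_s(z,Z)$ that coincides with $G_s$ when $|z|\le C$ and, by Lemma \ref{lem:sensitivityG}, is globally Lipschitz in $(z,Z)$, the driver $(y,z,Z)\mapsto\overline G_s(z,Z)-\beta y$ satisfies the hypotheses of Lemma \ref{lem:BSDE}(i) (in particular $\overline G_s(0,0)=G_s(0,0)$ is bounded, by boundedness of $b,r$ and compactness of $\mathcal{A}$, hence square-integrable on $[t,T]$), so the truncated BSDE has a unique solution; the pair above also solves it (the truncation being inactive there), hence it is that unique solution, giving the well-posedness asserted in (ii) within the natural class of solutions with bounded $Z$.

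The conceptual crux, and the step I expect to cost the most care, is the bookkeeping after substituting the PDE: one must verify that once the drift of $X^{\pi^*}$ is absorbed into $\widehat W$, the remaining drift is \emph{precisely} $\beta Y-G_s(\cdot,\cdot)$, which is where the definition \eqref{eq:Gt} of $G_s$ and the control-in-the-drift-only assumption (so that $\sigma^{-1}z=\nabla J$) are used. On the technical side, two points need attention: the regularity required to legitimize It\^o's formula for $J^n$ and $J^*$ (inherited from Lemma \ref{lem:PEFK} and the well-posedness of \eqref{eq:expHJB}, or obtained via mollification/It\^o--Krylov), and the uniqueness in (ii), where $G_s$ is only \emph{locally} Lipschitz, so that the a priori bound $|Z^{t,x}|\le C$ coming from $|\nabla J^*|_\infty<\infty$ is what allows the reduction to the globally Lipschitz setting of Lemma \ref{lem:BSDE}(i).
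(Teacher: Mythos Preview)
Your proof of part (i) is essentially identical to the paper's: It\^o's formula applied to $J^n(s,X^{\pi^*}_s)$, substitution of \eqref{eq:JngamPDE}, and a Girsanov transformation to absorb the $\pi^*$-drift into $\widehat W$.

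For part (ii) you take a genuinely different route. The paper argues in the \emph{BSDE$\to$PDE} direction: it first invokes Lemma \ref{lem:BSDE}(i) together with Lemma \ref{lem:sensitivityG} to obtain existence and uniqueness of the BSDE \eqref{eq:Ytxs}, then writes the equation back under $\widetilde W$ and appeals to the nonlinear Feynman--Kac correspondence (e.g.\ \cite[Proposition 4.3]{EPQ97}) to identify $Y^{t,x}_t$ with the solution of the semilinear PDE, which is the exploratory HJB equation and hence equals $J^*$. You go in the \emph{PDE$\to$BSDE} direction: starting from the known HJB regularity of $J^*$, you construct the solution $(J^*(s,X^{\pi^*}_s),(\sigma\nabla J^*)(s,X^{\pi^*}_s))$ by rerunning the It\^o computation, and then handle uniqueness by truncating $G_s$ so as to reduce to the globally Lipschitz setting of Lemma \ref{lem:BSDE}(i). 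Your route is slightly more careful about the fact that Lemma \ref{lem:sensitivityG} only yields a Lipschitz bound on $\{|z|,|Z|\le C\}$ (a point the paper glosses over when citing Lemma \ref{lem:BSDE}(i)), at the cost of concluding uniqueness only in the class of solutions with bounded $Z$; the paper's route is more modular and gives uniqueness as a black-box consequence of standard BSDE theory. Either approach suffices for the application in Theorem \ref{thm:cvrate}, where only the bounded solution $Z^{t,x}=(\sigma\nabla J^*)(\cdot,X^{\pi^*}_\cdot)$ is used.
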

\begin{proof}
(i) Applying It\^o's formula to $J^n (s, X_s^{\pi^*})$ and noting Lemma \ref{lem:PEFK},
we get
\begin{equation}
\label{eq:chara}
\begin{aligned}
& dJ^n (s, X_s^{\pi^*}) \\
& = \left[ \partial_t J^n (s, X_s^{\pi^*}) + \frac{1}{2} \tr(\sigma \sigma^T \nabla^2 J^n) (s, X_s^{\pi^*})
+ \widetilde{b}(s, X_s^{\pi^*}, \pi^*) \, \nabla J^n (s, X_s^{\pi^*})\right]ds + \left( \sigma \, \nabla J^n \right) (s, X_s^{\pi^*}) d\widetilde W_s \\
& = \bigg[  \widetilde{b}(s, X_s^{\pi^*}, \pi^*) \, \nabla J^n (s, X_s^{\pi^*}) -\widetilde{b}(s, X_s^{\pi^*}, \pi^n) \, \nabla J^n (s, X_s^{\pi^*}) - \widetilde{r} (s, X_s^{\pi^*}, \pi^n)  \\
& \qquad  + \gamma \int_\mathcal{A} \log \pi^n (a \,|\, s, X_s^{\pi^*}) \, \pi^n (a \,|\, s, X_s^{\pi^*}) da + \beta J^n (s, X_s^{\pi^*}) \bigg] ds + \left( \sigma \, \nabla J^n \right)(s, X_s^{\pi^*}) d\widetilde W_s,
\end{aligned}
\end{equation}
where $\pi^*$ and $\pi^n$ appearing in $\widetilde{b}\left(s, X_s^{\pi^*}, \cdot \right)$, $\widetilde{r}\left(s, X_s^{\pi^*}, \cdot \right)$ stand for $\pi^* (\cdot \,|\, s, X_s^{\pi^*})$ and $\pi^n (\cdot \,|\, s, X_s^{\pi^*})$ respectively.

\quad Let $\widehat{\mathbb{P}}$ be a change of measure of $\mathbb{P}$,
with the Radon--Nikodym derivative:
\begin{equation*}
\frac{d\widehat{\mathbb{P}}}{d \mathbb{P}} = \mathcal{E}\left( -\int_0^{\cdot}  \widetilde{b}(s, X^{\pi^*}_s, \pi^*) \, \sigma^{-1}(s, X^{\pi^*}_s) \, d\widetilde W_s\right)_T,
\end{equation*}
where $\mathcal{E}(\cdot)$ is the stochastic exponential (see \cite[p.328]{RY99}).
So $\widehat{W}_t: = \widetilde W_t + \int_0^t  \widetilde{b}(s, X^{\pi^*}_s, \pi^*) \, \sigma^{-1}(s, X^{\pi^*}_s) \, ds$
is a $\widehat{\mathbb{P}}$-Brownian motion.

\quad Set
$Y^n_s: = J^n(s, X_s^{\pi^*})$ and $Z^n_s:= \left(\sigma \, \nabla J^n \right)(s, X_s^{\pi^*})$;
so $Y^n_t = J^n(t,x)$.
By integrating \eqref{eq:chara} on $[t,T]$ and noting that $\pi^n (a \,|\, s, X_s^{\pi^*}) = \pi(a \,|\, s, X_s^{\pi^*}, Z^{n-1}_s)$, we get
\begin{equation*}
Y^n_t = g(X^{\pi^*}_T) - \int_t^T \bigg[ \widetilde{b}(s, X_s^{\pi^*}, \pi^*) \,  \sigma^{-1}(s, X^{\pi^*}_s) \, Z^n_s
- G_s(Z^{n-1}_s, Z^n_s) + \beta Y^n_s \bigg] ds - \int_t^T Z^n_s d\widetilde W_s,
\end{equation*}
which leads to \eqref{eq:repJn}.

(ii) The fact that the BSDE \eqref{eq:Ytxs} has a unique solution follows from Lemma \ref{lem:BSDE}-(i) noting Lemma \ref{lem:sensitivityG}.
We have then
\begin{equation*}
\begin{aligned}
dY^{t,x}_s &= \bigg(\widetilde{b}(s, X^{\pi^*}_s, \pi^*) \sigma^{-1}(s, X^{\pi^*}_s)Z^{t,x}_s + \beta Y^{t,x}_s  \\
& \qquad - \gamma \log \int_\mathcal{A} \exp \left( \frac{1}{\gamma} (b(u, X^{\pi^*}_u, a) \sigma^{-1}(s, X^{\pi^*}_s)Z^{t,x}_s  + r(u, X^{\pi^*}_u, a) ) \right) da \bigg) ds + Z_s^{t,x} d\widetilde W_s.
\end{aligned}
\end{equation*}
A standard BSDE argument (see \cite[Proposition 4.3]{EPQ97} or \cite[Section 6.3]{Pham09}) shows that $Y^{t,x}_t = v(t,x)$ is the solution to the semi-linear PDE:
\begin{equation*}
\left\{ \begin{array}{lcl}
\partial_t v + \frac{1}{2} \tr\left(\sigma \sigma^T \nabla^2 v \right) + \gamma \log \int_{\mathcal{A}} \exp \left(\frac{1}{\gamma} (b(t,x,a) \nabla v + r(t,x,a)) \right)da - \beta v = 0, \\
v(T,x) = h(x).
\end{array}\right.
\end{equation*}
Note that $\gamma \int_\mathcal{A} H(t,x,a, \nabla v, \nabla^2 v) \, da = \frac{1}{2} \tr\left(\sigma \sigma^T \nabla^2 v \right) + \gamma \log \int_{\mathcal{A}} \exp \left(\frac{1}{\gamma} (b(t,x,a) \nabla v + r(t,x,a)) \right)da$.
Thus, $Y_t^{t,x}$ is the solution to the HJB equation \eqref{eq:expHJB}, i.e. $Y_t^{t,x} = J^*(t,x)$.
\end{proof}

\quad Now we prove Theorem \ref{thm:cvrate}.
\begin{proof}[Proof of Theorem \ref{thm:cvrate}]
Let $F_s(Y,z,Z): = G_s(z,Z) - \beta Y$.
It follows from Lemma \ref{lem:repJJ} that
$J^n(t,x) = Y^n_t$ and $J^*(t,x) = Y^{t,x}_t$ where
\begin{equation*}
\begin{aligned}
& Y^n_s = h(X^{\pi^*}_T) + \int_s^T F_u(Y^n_u, Z^{n-1}_u, Z^n_u) \, du - \int_s^T Z^n_u d\widehat{W}_u, \\
& Y^{t,x}_s = h(X^{\pi^*}_T)  + \int_s^T F_u(Y^{t,x}_u, Z^{t,x}_u, Z^{t,x}_u) \, du - \int_s^T Z^{t,x}_u d\widehat{W}_u.
\end{aligned}
\end{equation*}
Note that $|\nabla J^*| \le M$ (with $M$ independent of $\gamma$; see \cite{TZZ21}), and $Z^{t,x}_s= \left(\sigma \nabla J^* \right)(s, X^{\pi^*}_s)$.
By Lemma \ref{lem:sensitivityG}, we get
\begin{equation*}
\begin{aligned}
& |F_u(Y^n_u, Z^{n-1}_u, Z^n_u) - F_u(Y^{t,x}_u, Z^{t,x}_u, Z^{t,x}_u)| \\
& \qquad \le \beta |Y^n_u - Y^{t,x}_u| +  L \left( 1+ e^{\frac{L}{\gamma}} + \frac{1}{\gamma} e^{\frac{L}{\gamma}} \right) |Z^{t,x}_u- Z^{n-1}_u| + L|Z^{t,x}_u-Z^n_u|.
\end{aligned}
\end{equation*}
Applying Lemma \ref{lem:BSDE}-(iii) with $M_1 = \beta$, $M_2 = L \left( 1+ e^{\frac{L}{\gamma}} + \frac{1}{\gamma} e^{\frac{L}{\gamma}} \right)$ and $M_3 = L$, we have for $\theta \ge \beta + (1+\eta^{-1}) L^2\left( 1+ e^{\frac{L}{\gamma}} + \frac{1}{\gamma} e^{\frac{L}{\gamma}} \right)^2$,
\begin{equation*}
e^{\theta t} \widehat{\mathbb{E}}|Y^{t,x}_t - Y^n_t|^2  + |Z^{t,x} - Z^n|^2_{\widehat{\mathbb{H}}^\theta_t} \le \eta |Z^{t,x} - Z^{n-1}|^2_{\widehat{\mathbb{H}}^\theta_t},
\end{equation*}
where $\widehat{\mathbb{H}}^\theta_t$ is defined under $\widehat{\mathbb{P}}$.
So we get
$|Z^{t,x} - Z^n|^2_{\widehat{\mathbb{H}}^\theta_t} \le \eta^n |Z^{t,x} - Z^0|^2_{\widehat{\mathbb{H}}^\theta_t}$, and
\begin{equation*}
\begin{aligned}
|J^*(t,x) - J^n(t,x)|^2  & \le \eta^n \widehat{\mathbb{E}}\int_t^T e^{\theta (s-t)} |\sigma(s, X^{\pi^*}_s)|^2 | \nabla J^*(s, X^{\pi^*}_s) - \nabla J^0(s, X^{\pi^*}_s)|^2 ds \\
& \le \eta^n e^{\theta (T-t)} \widehat{\mathbb{E}}\int_t^T |\sigma(s, X^{\pi^*}_s)|^2 | \nabla J^*(s, X^{\pi^*}_s) - \nabla J^0(s, X^{\pi^*}_s)|^2 ds
\end{aligned}
\end{equation*}
which proves \eqref{eq:expcv}.
\end{proof}

\section{$q$-learning and regret}
\label{sc4}

\quad In this section, we study the $q$-learning algorithm \eqref{eq:SGD0}--\eqref{eq:sample0}
and derive  its regret.
As an intermediate step,
we consider a ``semi"-$q$-learning algorithm in Section \ref{sc41},
where value functions  of given policies   can be accessed
and we only need to learn the $q$-function. The advantage of studying the semi-$q$-learning first is to present the essential idea while simplifying notation.
We will show how to adapt the arguments in Section \ref{sc3}
to derive the convergence rate of the semi-$q$-learning algorithm.
The result for the $q$-learning algorithm is then provided in Section \ref{sc42}.

\subsection{Semi-$q$-learning}
\label{sc41}

Here we assume an oracle access to the value functions $J^n$, 
and the resulting algorithm is called the semi-$q$-learning.
We start with some initial parameter value $\phi^1$ for the $q$-function
and
a control policy $\pi^{1}(\cdot \,|\, \cdot,\cdot)$,
and for $n \ge 1$,
\begin{enumerate}[itemsep = 3 pt]
\item
Update
\begin{equation}
\label{eq:SGD1}
\phi_{n+1} = \phi_n + \alpha_{\phi,n} \int_0^T \int_t^T e^{-\beta(s-t)} \frac{\partial q^\phi}{\partial \phi}_{|\phi = \phi_n}(s, X_s^{\pi^{n}}, a_s^{\pi^{n}}) ds G^n_{t:T}dt,
\end{equation}
where
$G^n_{t:T}: = e^{-\beta (T-t)} h(X_T^{\pi^{n}}) - J^n(t,X_t^{\pi^{n}})
+ \int_t^T e^{-\beta(s-t)} [r(s, X^{\pi^{n}}_s, a^{\pi^{n}}_s) - q^{\phi_n}(s, X^{\pi^{n}}_s, a^{\pi^{n}}_s)] ds$.
\item
Sample
\begin{equation}
\label{eq:sample10}
\pi^{n+1}(a \,|\, t,x) \propto \exp\left(\frac{1}{\gamma} q^{\phi_{n+1}}(t,x,a) \right)da.
\end{equation}
\end{enumerate}

\quad Recall \eqref{eq:Jngam} that defines $J^n$ 
and it follows 
from \eqref{eq:qfunc} that
\begin{equation*}
q(t,x,a; \pi^{n}) = b(t,x,a) \nabla J^n(t,x) + r(t,x,a) + \frac{\partial J^n}{\partial t}(t,x)  + \frac{1}{2} \tr\left(\sigma^2(t,x) \nabla^2 J^n(t,x) \right) -  \beta J^n(t,x).
\end{equation*}
To avoid undue technicality, we assume that $\nabla J^n \ne 0$ almost everywhere.\footnote{It is known \cite{WZ20} that for linear--quadratic (LQ) problems, $\nabla J^n(t,x) = A_n(t)  x + B_n(t)$ with $A_n(t) > 0$ for a suitably chosen initial policy. This satisfies $\nabla J^n(t,x) \ne 0$ almost everywhere. The example presented immediately after Corollary \ref{coro:45} is another instance  satisfying this assumption.}
Let $b^{n+1}(t,x,a)$ be a measurable function
such that the following holds almost everywhere\footnote{Because $\nabla J^n \ne 0$ almost everywhere,  $b^n(t,x,a)$ is uniquely determined if $d = 1$, and are solutions to the underdetermined system \eqref{eq:bn} if $d \ge 2$.
The measurability follows from standard measurable selection theory, e.g., \cite{Wagner77}.}:
\begin{equation}
\label{eq:bn}
q^{\phi_{n+1}}(t,x,a) = b^{n+1}(t,x,a) \nabla J^n(t,x) + r(t,x,a) + \frac{\partial J^n}{\partial t}(t,x)  + \frac{1}{2} \tr\left(\sigma^2(t,x) \nabla^2 J^n(t,x) \right) -  \beta J^n(t,x).
\end{equation}
The policy update \eqref{eq:sample10} can now be written as:
\begin{equation}
\tag{4.3'}
\label{eq:sample20}
\pi^{n+1}(a \,|\, t,x) \propto \exp\left(\frac{1}{\gamma} (b^{n+1}(t,x,a) \nabla J^n(t,x)+ r(t,x,a)) \right)da.
\end{equation}
As $q^{\phi_{n+1}}(t,x,a)$ is expected to
be close to $q(t,x,a; \pi^n)$ when $n$ is large,
so is $b^n(t,x,a)$ to $b(t,x,a)$.
To present our result, we need the following condition in addition to Assumption \ref{assump}.
\begin{assumption}
\label{assump2}
There exists $\overline{b} > 0$ such that $\ess_{t,x,a} |b^n| \le \overline{b}$ for all $n$.
\end{assumption}

\quad The result below shows how the convergence of the value functions
depends in an explicit way on that of the $q$ values.
\begin{theorem}
\label{thm:main2}
Let Assumptions \ref{assump} and \ref{assump2} hold,
and assume that there is $M > 0$ such that
$|\nabla J^n| \le M$ for all $n$.
Fix $\eta \in (0,1)$. There exist $L$, $C> 0$ (independent of $\gamma$ and $n$) such that
\begin{equation}
\label{eq:cv2}
|J^*(t,x) - J^n(t,x)|^2 \le C \left( \eta^n e^{\Lambda(\gamma) (T -t)} + \left(1 + \frac{1}{\gamma} e^{\frac{L}{\gamma}}\right) \sum_{k = 1}^n \eta^{n-k} |q(\cdot;\pi^k) - q^{\phi_{k+1}}|_\infty \right),
\end{equation}
where $\Lambda(\gamma):= \beta + (1+\eta^{-1}) L^2\left[ 1+ e^{\frac{L}{\gamma}} + \frac{1}{\gamma} e^{\frac{L}{\gamma}} +
L \left( 1 + \frac{1}{\gamma} e^{\frac{L}{\gamma}}\right) \sup_n |q(\cdot;\pi^n) - q^{\phi_{n+1}}|_\infty\right]^2$.
\end{theorem}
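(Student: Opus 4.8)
The plan is to re-run the BSDE machinery behind Theorem~\ref{thm:cvrate}, with the exact policy improvement \eqref{eq:pingam} replaced by the $q$-learning-driven update \eqref{eq:sample20}, and to track how the $q$-approximation error feeds into the contraction. The key structural observation comes from subtracting \eqref{eq:bn} from \eqref{eq:qfunc} at the $n$-th iterate: writing
\[
\rho_n(t,x,a) := q^{\phi_n}(t,x,a) - q(t,x,a;\pi^{n-1}) = \big(b^n(t,x,a)-b(t,x,a)\big)\nabla J^{n-1}(t,x),
\qquad |\rho_n|_\infty = |q(\cdot;\pi^{n-1})-q^{\phi_n}|_\infty ,
\]
the update \eqref{eq:sample20} becomes the perturbed Gibbs measure $\pi^n(a\,|\,t,x)\propto \exp\!\left(\frac1\gamma\big(b(t,x,a)\nabla J^{n-1}(t,x)+r(t,x,a)+\rho_n(t,x,a)\big)\right)$. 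So I would introduce, next to \eqref{eq:pipi}, the perturbed density $\pi(a\,|\,s,x,z,\rho)\propto\exp\!\left(\frac1\gamma\big(b(s,x,a)\sigma^{-1}(s,x)z+r(s,x,a)+\rho(s,x,a)\big)\right)$ and the driver $G_s(z,Z;\rho)$ defined from \eqref{eq:Gt} with this density; the computation of Lemma~\ref{lem:repJJ}-(i) (It\^o on $J^n(s,X^{\pi^*}_s)$, the Feynman--Kac PDE of Lemma~\ref{lem:PEFK}, the Girsanov change of measure $\widehat{\mathbb{P}}$) then goes through unchanged and shows that $Y^n_s:=J^n(s,X^{\pi^*}_s)$, $Z^n_s:=(\sigma\nabla J^n)(s,X^{\pi^*}_s)$ solve the BSDE with terminal value $h(X^{\pi^*}_T)$ and driver $G_s(Z^{n-1}_s,Z^n_s;\rho_n)-\beta Y^n_s$, while $J^*(t,x)=Y^{t,x}_t$ solves the same BSDE with $\rho\equiv0$ and frozen variable equal to its own $Z$, as in Lemma~\ref{lem:repJJ}-(ii).

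The next step is a perturbed version of Lemma~\ref{lem:sensitivityG}: for $|z|,|Z|\le C$ and bounded $\rho,\rho'$,
\[
|G_s(z',Z';\rho')-G_s(z,Z;\rho)| \le L\Big(1+e^{L/\gamma}+\frac1\gamma e^{L/\gamma}\Big)|z'-z| + L|Z'-Z| + L\Big(1+\frac1\gamma e^{L/\gamma}\Big)|\rho'-\rho|_\infty .
\]
The $(z,Z)$-part is Lemma~\ref{lem:sensitivityG} verbatim (replacing $b$ by $b^n$ inside the density is harmless since $|b^n|\le\overline b$ by Assumption~\ref{assump2}), and the $\rho$-part follows from the same decomposition into reward, drift and entropy terms, the only new input being a total-variation bound between two Gibbs measures whose log-densities differ by $\frac1\gamma(\rho'-\rho)$, supplied by the Spreij estimates \cite[Theorems 5 and 8]{Spr20} and responsible for the factor $\frac1\gamma e^{L/\gamma}$. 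With this, the driver of $J^n$ differs from that of $J^*$ through (i) a change of the frozen variable from $Z^{t,x}$ to $Z^{n-1}$ and (ii) the presence of $\rho_n$; handling (i) by the contraction of Lemma~\ref{lem:BSDE}-(iii) exactly as in Theorem~\ref{thm:cvrate} and (ii) by a BSDE a priori estimate against the additive perturbation $\le L(1+\frac1\gamma e^{L/\gamma})|\rho_n|_\infty$ — absorbing part of the mismatch into an enlarged effective Lipschitz constant (this enlargement is the $\sup_n|q(\cdot;\pi^n)-q^{\phi_{n+1}}|_\infty$ term in $\Lambda(\gamma)$) and carrying the rest as a remainder — yields, for $\theta\ge\Lambda(\gamma)$, a recursion
\[
|Z^{t,x}-Z^n|_{\widehat{\mathbb{H}}^\theta_t} \le \eta^{1/2}\,|Z^{t,x}-Z^{n-1}|_{\widehat{\mathbb{H}}^\theta_t} + C\Big(1+\frac1\gamma e^{L/\gamma}\Big)\,|q(\cdot;\pi^{n-1})-q^{\phi_n}|_\infty .
\]
Iterating from the first iterate, squaring, and passing from $Z^n$ back to $Y^n_t=J^n(t,x)$ via the weighted estimate of Lemma~\ref{lem:BSDE}-(iii) gives \eqref{eq:cv2}; here the leading $\eta^n e^{\Lambda(\gamma)(T-t)}$ comes from $|Z^{t,x}-Z^1|^2_{\widehat{\mathbb{H}}^\theta_t}\le Ce^{\Lambda(\gamma)T}$, which is finite because $|\sigma|$, $|\nabla J^*|$ and $|\nabla J^n|$ are bounded (Assumption~\ref{assump} and the hypothesis $|\nabla J^n|\le M$).

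I expect the main obstacle to be the perturbed sensitivity estimate above — in particular pinning down the exact $\gamma$-dependence $1+\frac1\gamma e^{L/\gamma}$ of the $\rho$-Lipschitz coefficient, which requires the same delicate juggling of normalizing constants and sup-norms as in the proof of Lemma~\ref{lem:sensitivityG} — and then propagating it through the iteration so that the error enters \eqref{eq:cv2} with the stated weight $\eta^{n-k}$ and prefactor, which is why the recursion must be run on the $\widehat{\mathbb{H}}^\theta_t$-norm of the gradient processes (where the perturbation is linear) rather than on the value functions directly. A secondary point, which is absent in Section~\ref{sc3} and explains the extra hypothesis, is that the learned policies $\pi^n$ need not be Lipschitz in $x$ (as $b^n$ is only assumed bounded), so the uniform gradient bound $|\nabla J^n|\le M$ must be imposed and used throughout to keep every $Z$-process bounded, which is precisely what makes the sensitivity estimate applicable along the trajectories $X^{\pi^*}$.
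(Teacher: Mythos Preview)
Your proposal is correct and follows essentially the same route as the paper. The only difference is cosmetic: you parameterize the learned policy as a Gibbs measure with additive perturbation $\rho_n$ in the exponent, whereas the paper encodes the same object through a modified drift $b^n$ (the two are linked exactly by your identity $\rho_n=(b^n-b)\nabla J^{n-1}$), and accordingly the paper's perturbed driver $G^n_s(z,Z)$ plays the role of your $G_s(z,Z;\rho_n)$. Your ``BSDE a priori estimate against the additive perturbation, absorbing part of the mismatch into an enlarged Lipschitz constant'' is precisely the content of the paper's Lemma~\ref{lem:BSDE2}, and your perturbed sensitivity bound is the paper's Lemma~\ref{lem:GGn}. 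The paper runs the contraction on squared $\widehat{\mathbb{H}}^\theta$-norms (so the remainder enters as $\eta\,\delta$ with $\delta$ linear in $|q(\cdot;\pi^{n-1})-q^{\phi_n}|_\infty$, which after iteration produces the $\sum_k\eta^{n-k}|q(\cdot;\pi^k)-q^{\phi_{k+1}}|_\infty$ in \eqref{eq:cv2}), while you state the recursion in unsquared norms; either bookkeeping works, but if you want to land exactly on \eqref{eq:cv2} you should iterate in the squared form.
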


\quad A proof of Theorem \ref{thm:main2} is delayed to Section \ref{sc43},
which is a ``perturbed" variant of the arguments in Section \ref{sc3}.
Here let us make several comments.
First,
the assumption $|\nabla J^n| \le M$ for all $n$ is again to avoid excessive technicality.
Because  $\nabla J^*$ is bounded and $J^n$ is expected to be a proxy to $J^*$,
it is reasonable to assume that $\nabla J^n$ is uniformly bounded.

\quad Second, the theorem quantifies the performance of the algorithm
by how well the $q$ values are learned.
We illustrate this with two cases:
\begin{itemize}[itemsep = 3 pt]
\item
If $\inf_n |q(\cdot;\pi^n) - q^{\phi_{n+1}}|_\infty:= \delta > 0$,
i.e. the $q$ values are not well approximated,
then the right side of \eqref{eq:cv2} is bounded from below by
$ \sqrt{1-\eta^n}\geq 1-\eta^n$,
yielding at least a linear regret.
\item
If the $q$ values are well learned, say $|q(\cdot;\pi^n) - q^{\phi_{n+1}}|_\infty \approx n^{-\alpha}$ for some $\alpha > 0$,
then \eqref{eq:cv2} specializes to:
\begin{equation*}
|J^*(t,x) - J^n(t,x)|^2  \lesssim \sum_{k = 1}^n \eta^{n-k} k^{-\alpha} \lesssim n^{-\alpha} \ln n,
\end{equation*}
where the second inequality is obtained by
splitting the sum into $[1, n - \ln n]$ and $[n - \ln n, n]$
and by taking $\eta$ to be sufficiently small.
The regret now is sublinear:
\begin{equation}
\label{eq:multbound}
\sum_{k = 1}^n |J^*(t,x) - J^k(t,x)| \lesssim
\left\{ \begin{array}{rcl}
n^{1 - \frac{\alpha}{2}} (\ln n)^{\frac{1}{2}} & \mbox{if}
& \alpha <2, \\
(\ln n)^{\frac{3}{2}} & \mbox{if} & \alpha = 2, \\
\mathcal{O}(1) & \mbox{if} & \alpha > 2.
\end{array}\right.
\end{equation}
\end{itemize}

\quad Next we consider the convergence of the $q$ values via the iteration \eqref{eq:SGD1}.
This is an instance of the Robbins--Monro algorithm,
which can be written as:
\begin{equation}
\label{eq:SGD3}
\phi_{n+1} = \phi_n + \alpha_{\phi, n} \mathcal{H}(\phi_n, \underbrace{(X^{\pi^n}_t), (a^{\pi^n}_t)}_{U_{n+1}}),
\end{equation}
where
\begin{equation}
\label{eq;Hbig}
\mathcal{H}(\phi, U):= \int_0^T \int_t^T e^{-\beta(s-t)} \frac{\partial q^\phi}{\partial \phi}(s, X_s^{\pi^{\phi}}, a_s^{\pi^{\phi}}) ds G^\phi_{t:T}dt,
\end{equation}
with $U = ((X_t^{\pi^\phi}), (a_t^{\pi^\phi}))$,
$\pi^\phi(a \,|\, t,x)\propto \exp\left(\frac{1}{\gamma} q^\phi(t,x,a)\right)$
and $G^\phi_{t:T}: = e^{-\beta (T-t)} h(X_T^{\pi^{\phi}}) - J(t,X_t^{\pi^{\phi}}; \pi^\phi)
+ \int_t^T e^{-\beta(s-t)} [r(s, X^{\pi^{\phi}}_s, a^{\pi^{\phi}}_s) - q^{\phi}(s, X^{\pi^{\phi}}_s, a^{\pi^{\phi}}_s)] ds$.
Clearly,
$(U_{n+1} \,|\, U_n, \ldots, U_1, \phi_n, \ldots, \phi_1) \stackrel{d}{=} (U_{n+1} \,|\, \phi_n)$.
Also define
\begin{equation}
h(\phi):= \mathbb{E}\mathcal{H}(\phi, U).
\end{equation}

\quad We need a few more assumptions to study the convergence rate of
the stochastic approximation \eqref{eq:SGD3} (or \eqref{eq:SGD1}).
\begin{assumption}
\label{assump3}
~
\begin{enumerate}[itemsep = 3 pt]
\item
The ODE $\phi'(t) = h(\phi(t))$ has a unique stable equilibrium $\phi^*$.
\item
There exists $C > 0$ such that
$\mathbb{E}(|\mathcal{H}(\phi_n, U_{n+1})|^2\,|\, \phi_n) \le C(1 + |\phi_n|^2)$.
\item
There exists $\kappa > 0$ such that
$(\phi - \phi^*)h(\phi) \le - \kappa |\phi - \phi^*|^2$ in a neighborhood of $\phi^*$.
\item
There exist $\rho_\phi, C > 0$ such that
$|q^\phi - q^{\phi'}|_\infty \le C |\phi - \phi'|^{\rho_\phi}$ for all $\phi, \phi'$.
\item
There exists $C > 0$ such that
$|q(\cdot; \pi^\phi) - q(\cdot; \pi^*)|_\infty \le C d_{TV}(\pi^\phi, \pi^*)$ for $\phi$ in a neighborhood of $\phi^*$.
\end{enumerate}
\end{assumption}
Set $\Delta:= |q(\cdot; \pi^*) - q^{\phi^*}(\cdot)|_\infty$. The value of $\Delta$ quantifies how close the family of functions $\{q^\phi\}_{\phi}$ approximates the optimal $q$ function.
When $\Delta = 0$ (if rarely the case), the family $\{q^\phi\}_{\phi}$ is rich enough to contain the optimal $q$ function.
The condition (1) ensures the optimal $\phi^*$ is the only candidate for the stochastic approximation.
The conditions (2)--(5) will be used to quantify the convergence rate of the $q$ values,
where (2)--(3) are growth conditions of the SGD \eqref{eq:SGD3},
(4) imposes H\"older regularity of the function approximation $\{q^\phi\}_{\phi}$,
and  (5) specifies the sensitivity of the $q$ function with respect to the (stochastic) policies.
Later we will give an example in which these conditions are satisfied.

\quad The following result provides an error estimate of
the semi-$q$-learning \eqref{eq:Jngam2}--\eqref{eq:sample10}.
\begin{theorem}
\label{thm:main3}
Let the assumptions in Theorem \ref{thm:main2} and Assumption \ref{assump3} hold.
Set $\alpha_{\phi,n} = \frac{A}{n^{\nu} + B}$ for some $\nu \le 1$, $A > \frac{\alpha}{2 \kappa}$ and $B > 0$,
and let $\varepsilon > 0$.
Then there exists $C > 0$ (independent of $n, \varepsilon$) such that
with probability $1 - \varepsilon$,
\begin{equation}
\label{eq:imp}
|J^*(t,x) - J^n(t,x)| \le C \left(\Delta + \frac{1}{\varepsilon^{\rho_\phi/2}} n^{-\frac{\nu \rho_\phi}{4}} (\ln n)^{\frac{1}{2}} \right),
\end{equation}
for $\Delta$ sufficiently small.
Consequently, the regret is:
\begin{equation}
\label{eq:summ}
\sum_{k = 1}^n |J^*(t,x) - J^k(t,x)| \lesssim \Delta n + \frac{1}{\varepsilon^{\rho_\phi/2}}
\left\{ \begin{array}{rcl}
n^{1 - \frac{\nu \rho_\phi}{4}} (\ln n)^{\frac{1}{2}} & \mbox{if}
& \nu \rho_\phi <4, \\
(\ln n)^{\frac{3}{2}} & \mbox{if} & \nu \rho_\phi = 4, \\
\mathcal{O}(1) & \mbox{if} & \nu \rho_\phi > 4.
\end{array}\right.
\end{equation}
\end{theorem}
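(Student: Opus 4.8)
The plan is to run the per-iteration $q$-approximation errors through the perturbed BSDE estimate of Theorem \ref{thm:main2}, and to produce those errors from a convergence rate for the stochastic approximation \eqref{eq:SGD3}. So the first move is to bound, for each $k$, the quantity $|q(\cdot;\pi^k)-q^{\phi_{k+1}}|_\infty$ that appears (weighted by $\eta^{n-k}$) on the right side of \eqref{eq:cv2}. Writing $\pi^k=\pi^{\phi_k}$ and $\pi^*=\pi^{\phi^*}$, I would split $|q(\cdot;\pi^k)-q^{\phi_{k+1}}|_\infty \le |q(\cdot;\pi^{\phi_k})-q(\cdot;\pi^{\phi^*})|_\infty + \Delta + |q^{\phi^*}-q^{\phi_{k+1}}|_\infty$. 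The last term is $\le C|\phi_{k+1}-\phi^*|^{\rho_\phi}$ by Assumption \ref{assump3}-(4). For the first term, Assumption \ref{assump3}-(5) bounds it by $C\,d_{TV}(\pi^{\phi_k},\pi^{\phi^*})$; since $\pi^{\phi_k}$ and $\pi^{\phi^*}$ are the Gibbs measures with exponents $q^{\phi_k}/\gamma$ and $q^{\phi^*}/\gamma$, the total-variation sensitivity estimate for Gibbs densities already used in Lemma \ref{lem:sensitivityG} (\cite[Theorems 5 and 8]{Spr20}) gives $d_{TV}(\pi^{\phi_k},\pi^{\phi^*})\le C(1+\tfrac1\gamma e^{L/\gamma})|q^{\phi_k}-q^{\phi^*}|_\infty\le C(1+\tfrac1\gamma e^{L/\gamma})|\phi_k-\phi^*|^{\rho_\phi}$, by Assumption \ref{assump3}-(4) once more. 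Hence $|q(\cdot;\pi^k)-q^{\phi_{k+1}}|_\infty\lesssim \Delta+(1+\tfrac1\gamma e^{L/\gamma})(|\phi_k-\phi^*|^{\rho_\phi}+|\phi_{k+1}-\phi^*|^{\rho_\phi})$, which reduces everything to controlling $|\phi_k-\phi^*|$.

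Second, I would obtain the rate for the Robbins--Monro recursion \eqref{eq:SGD3}. Assumptions \ref{assump3}-(1)--(2) (unique stable equilibrium $\phi^*$ of $\dot\phi=h(\phi)$, and the quadratic growth bound on the update) give, by the classical ODE method for stochastic approximation, that $\phi_n\to\phi^*$ a.s.\ and that the iterates stay bounded; in particular $\sup_n|\phi_n-\phi^*|^{\rho_\phi}<\infty$, so by the previous paragraph $\sup_n|q(\cdot;\pi^n)-q^{\phi_{n+1}}|_\infty<\infty$ and $\Lambda(\gamma)<\infty$ --- this is one place the ``$\Delta$ sufficiently small'' hypothesis is used, ensuring the iterates settle in the neighbourhood where the local conditions \ref{assump3}-(3),(5) apply. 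Inside that neighbourhood, the local strong monotonicity $(\phi-\phi^*)h(\phi)\le-\kappa|\phi-\phi^*|^2$ together with the step size $\alpha_{\phi,n}=A/(n^\nu+B)$ yields the standard one-step inequality $\mathbb{E}|\phi_{n+1}-\phi^*|^2\le(1-2\kappa\alpha_{\phi,n}+o(\alpha_{\phi,n}))\,\mathbb{E}|\phi_n-\phi^*|^2+C\alpha_{\phi,n}^2$, and solving this recursion (using that $A$ exceeds the stated threshold, so the contraction wins) gives $\mathbb{E}|\phi_n-\phi^*|^2=\mathcal{O}(n^{-\nu})$. Converting by Markov's inequality, with probability $1-\varepsilon$ one has $|\phi_k-\phi^*|^2\le C\varepsilon^{-1}k^{-\nu}$, hence $|\phi_k-\phi^*|^{\rho_\phi}\le C\varepsilon^{-\rho_\phi/2}k^{-\nu\rho_\phi/2}$; a maximal version of this (or the observation that only $\mathcal{O}(\ln n)$ indices near $k=n$ actually contribute, because of the geometric weight $\eta^{n-k}$) makes the bound hold for all the relevant $k$ at once, up to constants.

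Finally I would assemble. Plugging the first-paragraph estimate into the weighted sum of Theorem \ref{thm:main2} and using the second paragraph, the $\Delta$ terms contribute $\lesssim\Delta/(1-\eta)$, and for the polynomial part I would split $\sum_{k=1}^n\eta^{n-k}k^{-\nu\rho_\phi/2}$ at $k=n-\lceil(\ln n)/\ln(1/\eta)\rceil$ exactly as in the remark after Theorem \ref{thm:main2}: the head is geometrically negligible for $\eta$ small (using boundedness of $|\phi_k-\phi^*|$), and the tail is $\lesssim(\ln n)\,n^{-\nu\rho_\phi/2}$. This gives, with probability $1-\varepsilon$, $|J^*(t,x)-J^n(t,x)|^2\lesssim \eta^n e^{\Lambda(\gamma)(T-t)}+\Delta+\varepsilon^{-\rho_\phi/2}(\ln n)\,n^{-\nu\rho_\phi/2}$; the geometric term is dominated by the rest for large $n$, and extracting the square root delivers \eqref{eq:imp}. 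The regret bound \eqref{eq:summ} then follows by summing \eqref{eq:imp} over $k=1,\dots,n$ and using that $\sum_{k=1}^n k^{-\nu\rho_\phi/4}\asymp n^{1-\nu\rho_\phi/4}$, $\asymp\ln n$, or $=\mathcal{O}(1)$ according as $\nu\rho_\phi<4$, $=4$, or $>4$.

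The main obstacle is the stochastic-approximation step: because Assumptions \ref{assump3}-(3),(5) are only \emph{local}, one must first prove the iterates enter and remain in the right neighbourhood of $\phi^*$ (this needs the stability hypothesis and $\Delta$ small), and then upgrade the in-expectation rate to a high-probability bound that is uniform enough in the index $n$ to survive the geometric summation in Theorem \ref{thm:main2}. A secondary nuisance is bookkeeping of the $\gamma$-dependent constants, which can be exponentially large in $1/\gamma$; here $\gamma$ is fixed, so they are absorbed into $C$, but they do appear in $\Lambda(\gamma)$ and must be tracked to state the result cleanly.
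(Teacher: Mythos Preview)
Your proposal is correct and follows essentially the same route as the paper: obtain the $L^2$ rate $\mathbb{E}|\phi_n-\phi^*|^2=\mathcal{O}(n^{-\nu})$ from the Robbins--Monro theory under Assumption~\ref{assump3}(1)--(3), convert to a high-probability bound by Markov, feed it through the triangle inequality $|q(\cdot;\pi^n)-q^{\phi_{n+1}}|_\infty\le |q(\cdot;\pi^n)-q(\cdot;\pi^*)|_\infty+\Delta+|q^{\phi^*}-q^{\phi_{n+1}}|_\infty$ using (4),(5) and the Gibbs total-variation estimate, and then invoke Theorem~\ref{thm:main2} together with the $\sum_{k}\eta^{n-k}k^{-\alpha}$ computation \eqref{eq:multbound}. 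The paper simply cites \cite[Theorem~22]{BMP90} for the stochastic-approximation rate where you sketch the one-step recursion, and it glosses over the uniformity-in-$k$ issue that you explicitly flag (only $\mathcal{O}(\ln n)$ indices contribute due to the geometric weight); otherwise the arguments coincide.
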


\quad A proof of Theorem \ref{thm:main3} will be given in Section \ref{sc43}.
As previously mentioned, the error term $\Delta$ in \eqref{eq:imp} (or \eqref{eq:summ}) comes from function approximations, which is typically related to the quality of neural nets used, something that is not dictated or controlled by the $q$-learning in general.
If the family of functions $\{q^\phi\}_\phi$ include the optimal $q$ function,
then we get the following corollary.

\begin{corollary}
\label{coro:45}
Under the setting of Theorem \ref{thm:main3} with $\Delta = 0$,
there exists $C > 0$ (depending on $\gamma$ but not on $n, \varepsilon$) such that
with probability $1 - \varepsilon$,
\begin{equation}
|J^*(t,x) - J^n(t,x)| \le  \frac{C}{\varepsilon^{\rho_\phi/2}} n^{-\frac{\nu \rho_\phi}{4}} (\ln n)^{\frac{1}{2}}.
\end{equation}
Consequently, the regret is:
\begin{equation}
\sum_{k = 1}^n |J^*(t,x) - J^k(t,x)| \lesssim \ \frac{1}{\varepsilon^{\rho_\phi/2}}
\left\{ \begin{array}{rcl}
n^{1 - \frac{\nu \rho_\phi}{4}} (\ln n)^{\frac{1}{2}} & \mbox{if}
& \nu \rho_\phi <4, \\
(\ln n)^{\frac{3}{2}} & \mbox{if} & \nu \rho_\phi = 4, \\
\mathcal{O}(1) & \mbox{if} & \nu \rho_\phi > 4.
\end{array}\right.
\end{equation}
\end{corollary}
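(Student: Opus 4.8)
This corollary is the $\Delta = 0$ specialization of Theorem \ref{thm:main3}, so the plan is to invoke that theorem and record how its two conclusions collapse when the approximating family $\{q^\phi\}_\phi$ contains the optimal $q$ function.

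First I would note that the hypothesis $\Delta := |q(\cdot;\pi^*) - q^{\phi^*}|_\infty = 0$ vacuously satisfies the requirement ``$\Delta$ sufficiently small'' in Theorem \ref{thm:main3}, which therefore applies with the stated step sizes $\alpha_{\phi,n} = A/(n^\nu + B)$. Substituting $\Delta = 0$ into \eqref{eq:imp} gives, with probability $1-\varepsilon$,
\[
|J^*(t,x) - J^n(t,x)| \le \frac{C}{\varepsilon^{\rho_\phi/2}}\, n^{-\nu\rho_\phi/4} (\ln n)^{1/2},
\]
where $C$ collects the $\gamma$-dependent quantities already present in the proof of Theorem \ref{thm:main3} --- the exponent $\Lambda(\gamma)$ of Theorem \ref{thm:main2}, the prefactor $1 + \gamma^{-1} e^{L/\gamma}$ multiplying the $q$-error sum in \eqref{eq:cv2}, and the constants from Assumption \ref{assump3}(4)--(5) --- each finite for every fixed $\gamma > 0$ but not necessarily uniform in $\gamma$, which is exactly the ``depending on $\gamma$'' qualifier. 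The estimate holds for all $n$ on a single event of probability $1-\varepsilon$, since in the proof of Theorem \ref{thm:main3} it descends from a stochastic-approximation bound on $|\phi_n - \phi^*|$ valid on such an event.

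Second I would obtain the regret bound by summing the displayed per-iteration estimate over $k = 1,\dots,n$ on that event, which reduces the claim to the elementary asymptotics of $\sum_{k=1}^n k^{-\nu\rho_\phi/4}(\ln k)^{1/2}$: by integral comparison this is of order $n^{1 - \nu\rho_\phi/4}(\ln n)^{1/2}$ when $\nu\rho_\phi < 4$, of order $(\ln n)^{3/2}$ when $\nu\rho_\phi = 4$ (using $\int_1^n x^{-1}(\ln x)^{1/2}\,dx \asymp (\ln n)^{3/2}$), and bounded when $\nu\rho_\phi > 4$, yielding exactly \eqref{eq:summ} with the $\Delta n$ term removed. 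There is no substantive obstacle beyond what Theorem \ref{thm:main3} already handles; the only point requiring a moment's care is the bookkeeping of $C$, i.e.\ checking that setting $\Delta = 0$ merely deletes terms and never divides by $\Delta$ nor sends any exponent to infinity, so all constants remain finite.
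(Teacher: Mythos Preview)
Your proposal is correct and matches the paper's approach: the corollary is stated without proof in the paper precisely because it is the immediate specialization of Theorem~\ref{thm:main3} to $\Delta = 0$, and your write-up simply makes that substitution explicit in \eqref{eq:imp} and \eqref{eq:summ}. One minor caveat: the paper's proof of Theorem~\ref{thm:main3} obtains the bound on $|\phi_n - \phi^*|$ via Markov's inequality for each fixed $n$, not on a single event uniform in $n$, so your remark about ``a single event of probability $1-\varepsilon$'' slightly overstates what is established there; since the regret bound \eqref{eq:summ} is already part of Theorem~\ref{thm:main3}, you can simply cite it with $\Delta = 0$ rather than re-derive it by summation.
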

In particular, taking $\nu = 1$ and $\rho_\phi = 1$ (Lipschitz),
we get a {\it sublinear} $n^{\frac{3}{4}}$-regret bound.

\quad
Now we illustrate Assumption \ref{assump3} with a simple one-dimensional linear example.
Set $\beta= 0$, $\gamma = 1$,
\begin{equation*}
b(t,x,a) = Ba \mbox{ for some } B, \quad \sigma(t,x,a) = 1,\quad r(t,x,a) = 0, \quad  h(t,x,a) = x,
\end{equation*}
with the action space $\mathcal{A} = [0,1]$.
Given a policy $\pi(\cdot)$, denote by
$E(\pi):=\int_\mathcal{A} a \pi(a) da$ and $\mbox{Ent}(\pi):= - \int_\mathcal{A} \log \pi(a) \pi(a) da$ its mean and differential entropy respectively.
Observe that
\begin{equation*}
J(t,x; \pi) = x + (B E(\pi) + \mbox{Ent}(\pi))(T-t), \quad q(t,x,a; \pi) = B(a - E(\pi)) - \mbox{Ent}(\pi);
\end{equation*}
so $\nabla  J(t,x; \pi) = 1$ for any $\pi(\cdot)$.
We parametrize $q^\phi(t,x,a) = \phi a + \log \left(\frac{\phi}{e^{\phi} - 1} \right)$,
and $\pi^\phi(t,x,a) = \frac{\phi}{e^\phi -1} e^{\phi a}$.
A direct computation yields:
\begin{equation*}
h(\phi) = -\frac{\phi}{2}\left(\frac{1}{\phi^2} - \frac{e^\phi}{(e^\phi-1)^2} \right) T.
\end{equation*}
This function is plotted in Figure \ref{fig:0} (with $T=1$) for visualization.
The condition (1) is satisfied, as $h(\phi)$ has a unique zero $\phi^* = 0$ and $h'(0) = -\frac{1}{24} T < 0$.
Next,
$\mathbb{E}(|\mathcal{H}(\phi_n, U_{n+1})|^2\,|\, \phi_n) \le C T(1+ |\phi_n|^2)$ for some constant $C > 0$,
and $\phi h(\phi) \le - \frac{1}{48} \phi^2$ in a neighborhood of $\phi^* = 0$;
so the conditions (2)--(3) hold.
Note that $|q^\phi(t,x,a) - q^{\phi'}(t,x,a)| \le 2 |\phi - \phi'|$, leading to the  condition (4) with $\rho_\phi = 1$.
Finally,
$|q(\cdot; \pi^\phi) - q(\cdot; \pi^*)|_\infty \le  B |E(\pi^\phi) - E(\pi^*)| + |\mbox{Ent}(\pi^\phi) - \mbox{Ent}(\pi^*)|
\le 2\left(B + \max \log \pi^* + \max \frac{1}{c \wedge \pi^*} \right) d_{TV}(\pi^\phi, \pi^*)$
for some constant $c > 0$ (depending on the neighborhood of $\phi^*$),
which yields the condition (5).

\begin{figure}[h]
\centering
\includegraphics[width=0.4\columnwidth]{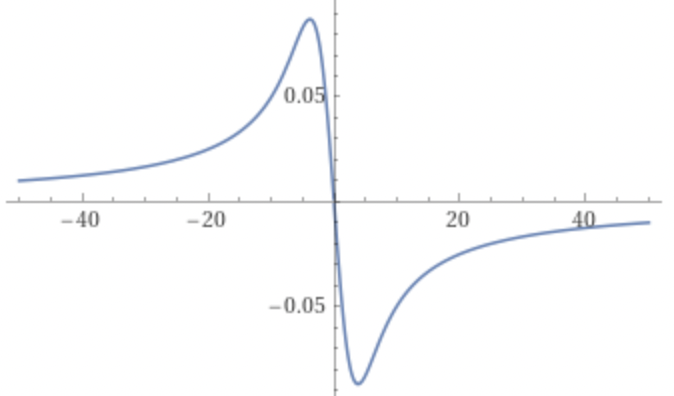}
\caption{Plot of $\phi \to -\frac{\phi}{2}\left(\frac{1}{\phi^2} - \frac{e^\phi}{(e^\phi-1)^2} \right)$.}
\label{fig:0}
\end{figure}

\subsection{$q$-learning}
\label{sc42}

Now we consider the $q$-learning \eqref{eq:SGD0}--\eqref{eq:sample0} by incorporating the policy evaluation part for learning $J^n$.
The idea is similar to the second half of Section \ref{sc41}, which we expand as follows.

\quad The update \eqref{eq:SGD0} can be written as:
\begin{equation}
\label{eq;SGDkk}
(\theta_{n+1}, \phi_{n+1})
= (\theta_n, \phi_n) + (\alpha_{\theta, n}, \alpha_{\phi, n})
\mathcal{H}((\theta_n, \phi_n), \underbrace{(X^{\pi^n}_t), (a^{\pi^n}_t)}_{U_{n+1}}),
\end{equation}
where
\begin{equation}
\mathcal{H}((\theta, \phi), U):=
\left( \int_0^T \frac{\partial J^\theta}{\partial \theta}(t, X_t^{\pi^{\phi}}) G^{\theta, \phi}_{t:T}dt,
 \int_0^T \int_t^T e^{-\beta(s-t)} \frac{\partial q^\phi}{\partial \phi}(s, X_s^{\pi^{\phi}}, a_s^{\pi^{\phi}}) ds G^{\theta, \phi}_{t:T}dt \right),
\end{equation}
with $U = ((X_t^{\pi^\phi}), (a_t^{\pi^\phi}))$,
$\pi^\phi(a \,|\, t,x)\propto \exp\left(\frac{1}{\gamma} q^\phi(t,x,a)\right)$
and $G^{\theta, \phi}_{t:T}: = e^{-\beta (T-t)} h(X_T^{\pi^{\phi}}) - J^\theta(t,X_t^{\pi^{\phi}})
+ \int_t^T e^{-\beta(s-t)} [r(s, X^{\pi^{\phi}}_s, a^{\pi^{\phi}}_s) - q^{\phi}(s, X^{\pi^{\phi}}_s, a^{\pi^{\phi}}_s)] ds$.
Define
\begin{equation}
h(\theta, \phi):= \mathbb{E}(\mathcal{H}(\phi,  \theta), U).
\end{equation}
We need the following assumptions.

\begin{assumption}
\label{assump4}
~
\begin{enumerate}[itemsep = 3 pt]
\item
The ODE $(\theta'(t), \phi'(t)) = h(\theta(t), \phi(t))$ has a unique stable equilibrium $(\theta^*, \phi^*)$.
\item
There exists $C > 0$ such that
$\mathbb{E}(|\mathcal{H}(\phi_n, U_{n+1})|^2\,|\, \phi_n) \le C(1 + |\theta_n|^2 + |\phi_n|^2)$.
\item
There exists $\kappa > 0$ such that
$(\theta - \theta^*, \phi - \phi^*)h(\theta, \phi) \le - \kappa \left( |\phi - \phi^*|^2 + |\theta - \theta^*|^2\right)$ in a neighborhood of $(\theta^*, \phi^*)$.
\item
There exist $\rho_\theta, C > 0$ such that
$|J^{\theta} - J^{\theta'}|_\infty \le C |\theta - \theta'|^{\rho_\theta}$
and $|q^{\phi} - q^{\phi'}|_\infty \le C |\phi - \phi'|^{\rho_\phi}$ for all $\theta, \theta', \phi, \phi'$.
\item
There exists $C > 0$ such that
$|q(\cdot; \pi^\phi) - q(\cdot; \pi^*)|_\infty \le C d_{TV}(\pi^\phi, \pi^*)$ for $\phi$ in a neighborhood of $\phi^*$.
\end{enumerate}
\end{assumption}

\quad Set $\Delta:= |J^*(\cdot) - J^{\theta^*}(\cdot)|_\infty \vee |q(\cdot; \pi^*) - q^{\phi^*}(\cdot)|_\infty$. The following result provides an error estimate of the $q$-learning algorithm,
the proof of which is similar to that of Theorem \ref{thm:main3}.
\begin{theorem}
\label{thm:main5}
Let Assumption \ref{assump4} hold.
Set $\alpha_{\theta,n}, \alpha_{\phi,n} = \frac{A}{n^{\nu} + B}$ for some $\nu \le 1$, $A > \frac{\alpha}{2 \kappa}$ and $B > 0$,
and let $\varepsilon > 0$.
Then there exists $C > 0$ (independent of $n, \varepsilon$) such that
with probability $1 - \varepsilon$,
\begin{equation}
 |J^*(t,x) - J^{\theta_n}(t,x)| \le \Delta + \frac{C}{\varepsilon^{\rho_\theta/2}} n^{-\frac{\nu \rho_\theta}{2}}.
\end{equation}
Consequently, the regret is:
\begin{equation}
\sum_{k = 1}^n |J^*(t,x) - J^{\theta_k}(t,x)| \lesssim \Delta n + \frac{1}{\varepsilon^{\rho_\theta/2}}
\left\{ \begin{array}{rcl}
n^{1 - \frac{\nu \rho_\theta}{2}} & \mbox{if}
& \nu \rho_\theta <2, \\
\ln n & \mbox{if} & \nu \rho_\theta = 2, \\
\mathcal{O}(1) & \mbox{if} & \nu \rho_\theta > 2.
\end{array}\right.
\end{equation}
\end{theorem}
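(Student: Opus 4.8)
The argument runs parallel to the proof of Theorem~\ref{thm:main3}, and is in fact somewhat simpler, because the value function carried by the algorithm is the \emph{parametrized} object $J^{\theta_n}$: no policy-improvement contraction enters this particular estimate, and the whole bound reduces to the convergence rate of the stochastic-approximation recursion \eqref{eq;SGDkk} towards its equilibrium $(\theta^*,\phi^*)$. Indeed, by the triangle inequality, the H\"older continuity $\theta\mapsto J^\theta$ in Assumption~\ref{assump4}-(4), and the very definition of $\Delta$,
\begin{equation*}
|J^*(t,x)-J^{\theta_n}(t,x)|\le |J^*(t,x)-J^{\theta^*}(t,x)|+|J^{\theta^*}(t,x)-J^{\theta_n}(t,x)|\le \Delta+C\,|\theta_n-\theta^*|^{\rho_\theta},
\end{equation*}
so it suffices to control $|\theta_n-\theta^*|$ in mean square, and then with high probability.

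First I would verify that \eqref{eq;SGDkk} is a bona fide Robbins--Monro scheme. Decompose $\mathcal{H}((\theta_n,\phi_n),U_{n+1})=h(\theta_n,\phi_n)+M_{n+1}$; since the trajectory $U_{n+1}$ is sampled afresh from the current Gibbs policy $\pi^{\phi_n}$, one has $(U_{n+1}\,|\,\text{past})\stackrel{d}{=}(U_{n+1}\,|\,(\theta_n,\phi_n))$, hence $\mathbb{E}[\mathcal{H}((\theta_n,\phi_n),U_{n+1})\,|\,\text{past}]=h(\theta_n,\phi_n)$ and $M_{n+1}$ is a martingale difference whose conditional second moment is controlled by Assumption~\ref{assump4}-(2). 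Assumption~\ref{assump4}-(1)--(2) give, via the ODE method, almost-sure boundedness of $(\theta_n,\phi_n)$ and convergence to $(\theta^*,\phi^*)$. Once the iterates lie in the neighborhood where the local strong monotonicity of Assumption~\ref{assump4}-(3) holds, the standard recursion for $v_n:=\mathbb{E}|(\theta_n,\phi_n)-(\theta^*,\phi^*)|^2$,
\begin{equation*}
v_{n+1}\le\big(1-2\kappa\alpha_n+o(\alpha_n)\big)v_n+C\alpha_n^2,\qquad \alpha_n=\tfrac{A}{n^\nu+B},
\end{equation*}
yields $v_n=\mathcal{O}(n^{-\nu})$ when $\nu<1$ and $v_n=\mathcal{O}(n^{-1})$ when $\nu=1$ (here the condition on $A$ is used); this is classical stochastic-approximation theory. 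A maximal-inequality refinement, applied to the nonnegative near-supermartingale $|(\theta_k,\phi_k)-(\theta^*,\phi^*)|^2$, upgrades this to a uniform-in-$k$ estimate of the same order.

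Passing from mean square to high probability by Markov's inequality, $\mathbb{P}(|\theta_n-\theta^*|>t)\le v_n/t^2$, and taking $t^2=v_n/\varepsilon$ gives $|\theta_n-\theta^*|\le C\varepsilon^{-1/2}n^{-\nu/2}$ with probability $1-\varepsilon$; the uniform version gives the analogous bound for every $k\le n$ on a single $(1-\varepsilon)$-event. Substituting into the displayed triangle inequality yields $|J^*(t,x)-J^{\theta_n}(t,x)|\le \Delta+C\varepsilon^{-\rho_\theta/2}n^{-\nu\rho_\theta/2}$, which is the asserted error bound. Summing over $k=1,\dots,n$ on the uniform event and using $\sum_{k=1}^n k^{-a}\asymp n^{1-a}$ for $a<1$, $\asymp\ln n$ for $a=1$, and $=\mathcal{O}(1)$ for $a>1$ with $a=\nu\rho_\theta/2$ produces the three-regime regret bound.

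The main obstacle is the stochastic-approximation step: establishing a priori boundedness of the \emph{coupled} iterates $(\theta_n,\phi_n)$ (needed before Assumption~\ref{assump4}-(3) can be invoked), correctly handling the controlled-noise structure in which the law of $U_{n+1}$ depends on $\phi_n$ through $\pi^{\phi_n}$ (here mild, since $U_{n+1}$ is a fresh trajectory and the noise is a genuine martingale difference, but Assumption~\ref{assump4}-(2) is still needed to tame its growth), and obtaining the \emph{uniform}-in-$n$ rather than pointwise high-probability rate, so that the regret is summable on one $(1-\varepsilon)$-event without a union-bound loss. Everything else---the triangle inequality, the H\"older continuity of $\theta\mapsto J^\theta$, and the elementary estimate of $\sum_k k^{-a}$---is routine, which is why this proof closely follows, while being somewhat simpler than, that of Theorem~\ref{thm:main3}.
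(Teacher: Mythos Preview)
Your proposal is correct and follows essentially the same route the paper indicates (it merely says the proof is ``similar to that of Theorem~\ref{thm:main3}''): obtain the mean-square rate $\mathbb{E}|(\theta_n,\phi_n)-(\theta^*,\phi^*)|^2=\mathcal{O}(n^{-\nu})$ from standard Robbins--Monro theory under Assumption~\ref{assump4}\,(1)--(3), convert to a high-probability bound by Markov's inequality, and then combine the triangle inequality with the H\"older estimate in Assumption~\ref{assump4}\,(4) and the definition of $\Delta$. You are right that this case is simpler than Theorem~\ref{thm:main3} because $J^{\theta_n}$ is the parametrized object itself, so the policy-improvement contraction (Theorem~\ref{thm:main2}) is not invoked; your extra care about the uniform-in-$k$ event for the regret sum is a refinement the paper does not spell out.
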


\quad
Our ultimate goal is to derive the regret bound of the learned policies $\pi^n(a \,|\, t,x) \propto \exp(q^{\phi_n}(t,x,a))$ in terms of  the {\it original} control problem \eqref{eq:classical} (without the entropy term in the reward functional). To this end,
given a policy $\pi(\cdot)$,
denfine
\begin{equation}
\label{eq:classfin}
\mathring{J}(t,x; \pi):=
\mathbb{E}\left[ \int_t^{T} e^{-\beta (s-t)}r(s, X^{\pi}_s, a^\pi_s) ds + e^{-\beta (T-t)}h(X^\pi_T)\bigg| X^\pi_t = x\right],
\end{equation}
where $(a^\pi_s,\;t\leq s\leq T)$ is sampled from $\pi$ and $\mathbb{E}$ is with respect to both the original probability measure $\mathbb{P}$ and the policy randomization.
The following theorem gives the discrepancy between $\mathring{J}^n(t,x):= \mathring{J}(t,x;\pi^n)$
and $\mathring{J}(t,x)$.
\begin{theorem}
\label{thm:final}
Let the assumptions in Theorem \ref{thm:main2} and Assumption \ref{assump4} hold.
Moreover, assume that $|q^{\phi_*}|_\infty < \infty$.
Set $\alpha_{\phi,n} = \frac{A}{n^{\nu} + B}$ for some $\nu \le 1$, $A > \frac{\alpha}{2 \kappa}$ and $B > 0$,
and let $\varepsilon > 0$.
Then there exists $C > 0$ (independent of $n, \varepsilon$) such that
with probability $1 - \varepsilon$,
\begin{equation}
\label{eq:originalbd}
|\mathring{J}^n(t,x) - \mathring{J}(t,x)| \le C \left(\Delta + \frac{\gamma}{\beta} + \frac{1}{\varepsilon^{\rho_\phi/2}} n^{-\frac{\nu \rho_\phi}{4}} (\ln n)^{\frac{1}{2}} \right) + |J^*(t,x) - \mathring{J}(t,x)|.
\end{equation}
Consequently, the regret is:
\begin{equation}
\sum_{k = 1}^n |\mathring{J}(t,x) - \mathring{J}^k(t,x)| \lesssim \left(\Delta + \frac{\gamma}{\beta} + |J^*(t,x) - \mathring{J}(t,x)| \right) n +   \frac{1}{\varepsilon^{\rho_\phi/2}}
\left\{ \begin{array}{rcl}
n^{1 - \frac{\nu \rho_\phi}{4}} (\ln n)^{\frac{1}{2}} & \mbox{if}
& \nu \rho_\phi <4, \\
(\ln n)^{\frac{3}{2}} & \mbox{if} & \nu \rho_\phi = 4, \\
\mathcal{O}(1) & \mbox{if} & \nu \rho_\phi > 4.
\end{array}\right.
\end{equation}
\end{theorem}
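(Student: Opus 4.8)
The plan is to reduce \eqref{eq:originalbd} to the error bound $|J^*-J^n|$ that the machinery of Section~\ref{sc41} already delivers, by splitting off an \emph{entropy-regularization bias} of order $\gamma/\beta$ and keeping the \emph{exploration bias} $|J^*(t,x)-\mathring J(t,x)|$ untouched. Writing $\mathrm{Ent}(\pi):=-\int_\mathcal{A}\pi(a)\log\pi(a)\,da$, I would first record that, by the definitions \eqref{eq:Jngam} and \eqref{eq:classfin}, the reward integrands of $J^n$ and of $\mathring J^n:=\mathring J(\cdot;\pi^n)$ differ exactly by the entropy term, so that
\[
\mathring J^n(t,x)=J^n(t,x)-\gamma\,\mathbb{E}\Big[\int_t^T e^{-\beta(s-t)}\mathrm{Ent}(\pi^n_s)\,ds\Big].
\]
Since the uniform law maximizes differential entropy on the compact set $\mathcal A$, $\mathrm{Ent}(\pi^n_s)\le\log|\mathcal A|$, hence the displayed integral is at most $\gamma(\log|\mathcal A|\vee 0)/\beta$; together with the trivial inequality $\mathring J^n(t,x)\le\mathring J(t,x)$ (a relaxed feedback control cannot beat the optimal value) this gives
\[
|\mathring J^n(t,x)-\mathring J(t,x)|=\mathring J(t,x)-\mathring J^n(t,x)\le |J^*(t,x)-J^n(t,x)|+|J^*(t,x)-\mathring J(t,x)|+\frac{\gamma(\log|\mathcal A|\vee 0)}{\beta}.
\]
Note only the one-sided entropy bound is used; the other direction is free from $\mathring J^n\le\mathring J$.

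For the term $|J^*-J^n|$ I would replay the argument behind Theorems~\ref{thm:main2} and~\ref{thm:main3}, now fed by the coupled stochastic approximation \eqref{eq;SGDkk}. First, Assumption~\ref{assump4}(1)--(3) and the growth condition put us in the regime of the Robbins--Monro analysis used in the proof of Theorem~\ref{thm:main3}, which on an event of probability $1-\varepsilon$ yields $|\phi_k-\phi^*|\lesssim\varepsilon^{-1/2}k^{-\nu/2}$ for all $k\le n$ (and similarly for $\theta_k$); in particular $\sup_k|q^{\phi_k}|_\infty<\infty$ by Assumption~\ref{assump4}(4) and $|q^{\phi^*}|_\infty<\infty$, which is what makes the entropy term above finite. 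Second, combining Assumption~\ref{assump4}(4)--(5) with the sensitivity of the Gibbs map $\phi\mapsto\pi^\phi$ (as in Lemma~\ref{lem:sensitivityG}, with $\gamma$ fixed) gives
\[
|q(\cdot;\pi^k)-q^{\phi_{k+1}}|_\infty\le\Delta+C\,d_{TV}(\pi^k,\pi^{\phi^*})+|q^{\phi^*}-q^{\phi_{k+1}}|_\infty\lesssim\Delta+\varepsilon^{-\rho_\phi/2}k^{-\nu\rho_\phi/2}.
\]
Plugging this into the bound of Theorem~\ref{thm:main2} and summing the geometrically weighted series (splitting the range into $[1,n-\ln n]$ and $[n-\ln n,n]$ and taking $\eta$ small, exactly as in the discussion preceding \eqref{eq:multbound}) gives $|J^*(t,x)-J^n(t,x)|^2\lesssim\Delta+\varepsilon^{-\rho_\phi/2}n^{-\nu\rho_\phi/2}\ln n$, hence $|J^*(t,x)-J^n(t,x)|\lesssim\Delta+\varepsilon^{-\rho_\phi/2}n^{-\nu\rho_\phi/4}(\ln n)^{1/2}$ for $\Delta$ small, the passage from the squared to the linear dependence on $\Delta$ being handled as in Theorem~\ref{thm:main3}. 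Substituting into the decomposition proves \eqref{eq:originalbd}, and summing it over $k=1,\dots,n$ gives the regret: the ``steady-state'' part contributes $\lesssim(\Delta+\gamma/\beta+|J^*-\mathring J|)\,n$, while $\sum_{k=1}^n k^{-\nu\rho_\phi/4}(\ln k)^{1/2}$ produces the three regimes $n^{1-\nu\rho_\phi/4}(\ln n)^{1/2}$, $(\ln n)^{3/2}$, $\mathcal O(1)$ according as $\nu\rho_\phi<4$, $=4$, or $>4$.

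The hard part will be the first half of the second paragraph: getting a \emph{uniform-in-$k$}, high-probability rate for the coupled iterates $(\theta_k,\phi_k)$, and the uniform bound $\sup_k|q^{\phi_k}|_\infty<\infty$, in the genuinely model-free regime where the temporal-difference target $G^{\theta_k,\phi_k}_{t:T}$ uses the \emph{learned} value $J^{\theta_k}$ rather than the true $J^n$; the point is to show that the $\theta$-error does not destabilize the $\phi$-recursion, which is exactly what the joint dissipativity in Assumption~\ref{assump4}(3) is meant to guarantee, but it has to be turned into explicit rates and combined with a maximal inequality to make the bound uniform over $k\le n$. A secondary technical point is the justification of the identity relating $\mathring J^n$ to $J^n$ minus the entropy integral, i.e.\ the equivalence between the exploratory dynamics \eqref{eq:Xpidrift} and the sampled-action dynamics underlying \eqref{eq:classfin}; this is standard in the exploratory control framework (\cite{WZZ20,TZZ21}) but should be cited explicitly.
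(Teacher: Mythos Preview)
Your proposal is correct and follows essentially the same strategy as the paper: decompose $|\mathring J^n-\mathring J|$ into an entropy piece of order $\gamma/\beta$, the policy-improvement error $|J^*-J(\cdot;\pi^n)|$ governed by Theorem~\ref{thm:main2}, and the exploration bias $|J^*-\mathring J|$; then feed the $q$-approximation rates from the Robbins--Monro analysis into Theorem~\ref{thm:main2} exactly as in Theorem~\ref{thm:main3}. The paper's proof does precisely this, citing Theorem~\ref{thm:main3} verbatim for the middle term.

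The one genuine difference is in how you handle the entropy piece. The paper uses the symmetric triangle inequality and bounds $|\mathring J^n-J(\cdot;\pi^n)|$ by controlling $|\log\pi^n(a\,|\,t,x)|$ on both sides via $|q^{\phi^*}|_\infty<\infty$ and Assumption~\ref{assump4}(4), obtaining $|\mathring J^n-J(\cdot;\pi^n)|\le C\gamma/\beta$. You instead exploit the one-sided inequality $\mathring J^n\le\mathring J$ together with the upper bound $\mathrm{Ent}(\pi)\le\log|\mathcal A|$, which is slicker: it avoids the explicit two-sided bound on $\log\pi^n$ and makes the role of $|q^{\phi^*}|_\infty<\infty$ purely a finiteness issue (ensuring $J^n$ is finite so the decomposition makes sense). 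Both routes give the same $C\gamma/\beta$ contribution.

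Your flagged ``hard part'' --- a uniform-in-$k$ high-probability rate for the coupled iterates $(\theta_k,\phi_k)$ --- is well taken; the paper does not address it explicitly either, instead invoking Theorem~\ref{thm:main3} which, strictly read, gives a bound for fixed $n$ with probability $1-\varepsilon$. So on this point you are not missing anything the paper supplies; you are simply being more scrupulous about what remains to be justified.
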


\quad
A proof of Theorem \ref{thm:final} is deferred to Section \ref{sc43}.
Recall from \eqref{eq:bias} that under additional conditions on the parameters,
$ |J^*(t,x) - \mathring{J}(t,x)| \lesssim \gamma \ln(1/\gamma)$.
Thus, the regret is of order:
\begin{equation*}
\left(C(\gamma) \Delta + \frac{\gamma}{\beta} + \gamma \ln(1/\gamma) \right) n +   \frac{C(\gamma)}{\varepsilon^{\rho_\phi/2}}
\left\{ \begin{array}{rcl}
n^{1 - \frac{\nu \rho_\phi}{4}} (\ln n)^{\frac{1}{2}} & \mbox{if}
& \nu \rho_\phi <4, \\
(\ln n)^{\frac{3}{2}} & \mbox{if} & \nu \rho_\phi = 4, \\
\mathcal{O}(1) & \mbox{if} & \nu \rho_\phi > 4.
\end{array}\right.
\end{equation*}
Note, however, that the dependence on $\gamma$ of the constant $C(\gamma)$ is hard to track.
This is because it is related to the constants in Assumption \ref{assump4},
and these constants depend on $\gamma$ in rather obscure ways.
As explained in Section \ref{sc3}, it is possible to use an exploratory annealing $\gamma = \gamma_n \downarrow 0$ (as $n \to \infty$) to improve the regret bound, however,
at the cost of inefficient sampling of policies in each iteration.
The algorithmic implications of the above result is that, for a sufficiently good neural network approximation (so that $\Delta$ is sufficiently small) and a sufficiently small temperature parameter $\gamma$, the regret of the learned policies applied to the original control problem in the long run is sufficiently small.

\subsection{Proofs}
\label{sc43}

The proof of Theorem \ref{thm:main2} requires a series of lemmas.
First, similar to Lemma \ref{lem:repJJ}, we establish a representation for $J^n$ given by \eqref{eq:Jngam}.
Recall the definition of $b^n(t,x,a)$ from \eqref{eq:bn}.
Define
\begin{equation}
\label{eq:pnpn}
\pi^n(a \,|\, s,x,z) \propto \exp\left(\frac{1}{\gamma} (b^{n}(s,x,a) \sigma^{-1}(s,x) z+ r(s,x,a)) \right),
\end{equation}
and
\begin{equation}
\label{eq:Gnt}
\begin{aligned}
G^n_s(z,Z):= \int_\mathcal{A} \bigg[ b(s, X_s^{\pi^*}, a) \, \sigma^{-1} & (s, X_s^{\pi^*})Z + r(s,X_s^{\pi^*} ,a) \\
& - \gamma \log \pi^n (a \,|\, s,X_s^{\pi^*}, z) \bigg] \pi^n (a \,|\, s,X_s^{\pi^*}, z) \,da.
\end{aligned}
\end{equation}

\begin{lemma}
\label{lem:repJJ2}
Let Assumptions \ref{assump} and \ref{assump2} hold.
There exists $(\widehat{W}, \widehat{\mathbb{P}})$, with $\widehat{W}$ being $\widehat{P}$-Brownian motion, such that
\begin{equation}
J^n(t,x) = h(X^{\pi^*}_T) + \int_t^T [G^n_s(Z^{n-1}_s, Z^n_s)- \beta Y^n_s]ds - \int_t^T Z^n_s d \widehat{W}_s,
\end{equation}
where $Y^n_s: = J^n(s, X^{\pi^*}_s)$ and $Z^n_s:= (\sigma \nabla J^n)(s, X^{\pi^*}_s)$.
\end{lemma}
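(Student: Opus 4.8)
The plan is to mimic the proof of Lemma~\ref{lem:repJJ}(i) essentially verbatim, the only new feature being that the policy update \eqref{eq:sample20} now places the \emph{proxy} drift $b^{n}$, extracted from the learned $q$-function via \eqref{eq:bn}, in the Gibbs exponent, whereas the true state dynamics \eqref{eq:Xpidrift} and the true Hamiltonian in the Feynman--Kac PDE \eqref{eq:JngamPDE} still involve the genuine $b$. This asymmetry is precisely what the definitions \eqref{eq:pnpn}--\eqref{eq:Gnt} encode: the weight $\pi^{n}(\cdot\,|\,s,x,z)$ uses $b^{n}$, while the integrand inside $G^{n}_{s}$ uses the true $b$ and $r$. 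So the representation falls out of It\^o's formula plus a Girsanov change of measure, exactly as before.

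First I would check that, under Assumptions~\ref{assump} and~\ref{assump2} together with the bound $|\nabla J^{n}|\le M$, the policy $\pi^{n}(\cdot\,|\,t,x)$ of \eqref{eq:sample20} is a genuine, everywhere-positive density, continuous in $(t,x)$, whose induced coefficients $\widetilde{b}(\cdot,\pi^{n})$, $\widetilde{r}(\cdot,\pi^{n})$ and $\int_{\mathcal{A}}\log\pi^{n}\,\pi^{n}\,da$ are bounded and Lipschitz in $x$; this lets Lemma~\ref{lem:PEFK} apply, so $J^{n}$ solves \eqref{eq:JngamPDE}. Next I would apply It\^o's formula to $s\mapsto J^{n}(s,X^{\pi^{*}}_{s})$ along the $\pi^{*}$-controlled state, substitute $\partial_{t}J^{n}$ from \eqref{eq:JngamPDE}, and use that the second-order term $\tfrac12\tr(\sigma\sigma^{T}\nabla^{2}J^{n})$ is control-free and hence cancels cleanly against its PDE counterpart (this is where "control only in the drift" enters). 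Along the trajectory one has $\pi^{n}(a\,|\,s,X^{\pi^{*}}_{s})=\pi^{n}(a\,|\,s,X^{\pi^{*}}_{s},Z^{n-1}_{s})$ and $b(s,X^{\pi^{*}}_{s},a)\,\nabla J^{n}(s,X^{\pi^{*}}_{s})=b(s,X^{\pi^{*}}_{s},a)\,\sigma^{-1}(s,X^{\pi^{*}}_{s})\,Z^{n}_{s}$, so the remaining $ds$-integrand (the true-$b$, true-$r$ and entropy terms averaged against $\pi^{n}$) is exactly $G^{n}_{s}(Z^{n-1}_{s},Z^{n}_{s})-\beta Y^{n}_{s}$, modulo the drift contribution $\widetilde{b}(s,X^{\pi^{*}}_{s},\pi^{*})\,\nabla J^{n}(s,X^{\pi^{*}}_{s})$ produced by the $\pi^{*}$-dynamics. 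I would then introduce $\widehat{\mathbb{P}}$ with Radon--Nikodym density $\mathcal{E}\left(-\int_{0}^{\cdot}\widetilde{b}(s,X^{\pi^{*}}_{s},\pi^{*})\,\sigma^{-1}(s,X^{\pi^{*}}_{s})\,d\widetilde{W}_{s}\right)_{T}$, exactly as in Lemma~\ref{lem:repJJ}: since $|\widetilde{b}(\cdot,\pi^{*})\,\sigma^{-1}|\le\overline{b}\,\overline{\sigma}$ by Assumption~\ref{assump}-(iii), the stochastic exponential is a true martingale, $\widehat{W}$ is $\widehat{\mathbb{P}}$-Brownian, and the leftover drift $\widetilde{b}(s,X^{\pi^{*}}_{s},\pi^{*})\,\nabla J^{n}$ is absorbed. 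Integrating over $[t,T]$ then gives the claimed BSDE representation.

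Apart from these routine steps, the only point that needs genuine care is the first one: without the \emph{uniform} bound $\ess_{t,x,a}|b^{n}|\le\overline{b}$ (Assumption~\ref{assump2}) and $|\nabla J^{n}|\le M$, the Gibbs density $\pi^{n}$ of \eqref{eq:sample20} could degenerate and $\widetilde{b}(\cdot,\pi^{n}),\widetilde{r}(\cdot,\pi^{n})$ would lose the regularity/integrability needed to invoke Lemma~\ref{lem:PEFK} and to justify It\^o's formula and Girsanov; with those bounds in hand the step is standard. The rest is pure bookkeeping --- keeping straight which occurrences are $b$ versus $b^{n}$, and which slot ($z$ or $Z$) of $G^{n}_{s}$ each object fills --- and mirrors the computation \eqref{eq:chara} in the proof of Lemma~\ref{lem:repJJ}.
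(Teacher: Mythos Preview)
Your approach is correct and essentially identical to the paper's own proof, which simply states that the argument is the same as Lemma~\ref{lem:repJJ}(i) once one notes the identification $\pi^{n}(a\,|\,s,x)=\pi^{n}(a\,|\,s,x,(\sigma\nabla J^{n-1})(s,x))$ from \eqref{eq:pnpn}. The only minor discrepancy is that you invoke the uniform bound $|\nabla J^{n}|\le M$ to justify regularity of $\pi^{n}$, whereas the paper does not list that hypothesis in Lemma~\ref{lem:repJJ2} itself and reserves it for Theorem~\ref{thm:main2}.
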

\begin{proof}
The proof is the same as Lemma \ref{lem:repJJ}-(i),
noting that $J^n$ satisfies the PDE \eqref{eq:JngamPDE}
with
$\pi^n(a \,|\, s,x) = \pi^n(a \,|\, s, x, (\sigma \nabla J^{n-1})(s,x))$.
\end{proof}

\quad Recall the definition of $G_s(z, Z)$ from \eqref{eq:Gt}.
Next we bound $|G^n_s(z', Z') - G_s(z, Z)|$.
\begin{lemma}
\label{lem:GGn}
Let Assumptions \ref{assump} and \ref{assump2} hold.
Then for $|z|, |Z|, |Z'| \le C$,
there exists $L > 0$ (independent of $\gamma$) such that
\begin{equation}
\label{eq:GnGerr}
\begin{aligned}
|G^n_s(z', Z') - G_s(z, Z)| \le  & L\left( 1 + e^{\frac{L}{\gamma}} + \frac{1}{\gamma} e^{\frac{L}{\gamma}} \right) |z'-z| + L|Z'-Z| \\
& \qquad +
L\left( 1 + \frac{1}{\gamma} e^{\frac{L}{\gamma}}\right) \sup_a |(b - b^{n})(s,X^{\pi^*}_s,a) \sigma^{-1}(s,X^{\pi^*}_s) z'|\;\; a.s.
\end{aligned}
\end{equation}
\end{lemma}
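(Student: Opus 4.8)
The plan is to separate the two sources of discrepancy between $G^n_s(z',Z')$ and $G_s(z,Z)$: the change of arguments $(z',Z')\to(z,Z)$, already controlled by Lemma~\ref{lem:sensitivityG}, and the change of the Gibbs exponent from $b$ to $b^n$, which is the genuinely new ingredient. I would start from
\[
|G^n_s(z',Z') - G_s(z,Z)| \le |G^n_s(z',Z') - G_s(z',Z')| + |G_s(z',Z') - G_s(z,Z)|,
\]
and apply Lemma~\ref{lem:sensitivityG} to the second term directly (its hypotheses $|z|,|Z|\le C$ are exactly those assumed here, and it already absorbs the case $|z'-z|>C$), producing the terms $L\left(1+e^{\frac{L}{\gamma}}+\frac1\gamma e^{\frac{L}{\gamma}}\right)|z'-z| + L|Z'-Z|$ of \eqref{eq:GnGerr}.

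For the first term, observe that $G^n_s$ and $G_s$ carry the \emph{same} integrand $b(s,X^{\pi^*}_s,a)\sigma^{-1}(s,X^{\pi^*}_s)Z' + r(s,X^{\pi^*}_s,a) - \gamma\log(\cdot)$ and differ only in the integrating measure, $\pi^n(\cdot\,|\,s,X^{\pi^*}_s,z')$ versus $\pi(\cdot\,|\,s,X^{\pi^*}_s,z')$, the former using $b^n$ and the latter $b$ in the Gibbs exponent. Mimicking the $E_1,E_2,E_3$ split from the proof of Lemma~\ref{lem:sensitivityG}, I would bound: first, the $r$- and $b\sigma^{-1}Z'$-pieces by $(2\overline r + 2\overline b\,\overline\sigma\,C)\, d_{TV}\big(\pi^n(\cdot\,|\,s,X^{\pi^*}_s,z'),\pi(\cdot\,|\,s,X^{\pi^*}_s,z')\big)$, using $|Z'|\le C$ and Assumption~\ref{assump}-(iii); second, the entropy piece, by writing $\log\pi^n = \frac1\gamma(b^n\sigma^{-1}z'+r) - \log Z^n$ and similarly for $\pi$, so that $\gamma\big|\mathrm{Ent}(\pi^n)-\mathrm{Ent}(\pi)\big|$ collapses to $\gamma|\log(Z^n/Z)|$ plus terms of the form $\int(b^n-b)\sigma^{-1}z'\,d\pi^n$ and $\int(b\sigma^{-1}z'+r)\,(d\pi^n-d\pi)$; the first of these is at most $\delta:=\sup_a|(b-b^n)(s,X^{\pi^*}_s,a)\sigma^{-1}(s,X^{\pi^*}_s)z'|$, the last is again $O(d_{TV})$, and since $Z^n=\int e^{\Psi^n}\!da$, $Z=\int e^{\Psi}\!da$ with $|\Psi^n-\Psi|_\infty\le\delta/\gamma$, one has $\gamma|\log(Z^n/Z)|\le\delta$ outright.

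The crux is therefore the total-variation estimate between the two Gibbs measures $\pi^n(\cdot\,|\,s,x,z')$ and $\pi(\cdot\,|\,s,x,z')$, whose exponents differ by $\tfrac1\gamma(b^n-b)\sigma^{-1}z'$. Here I would invoke \cite[Theorem~8]{Spr20} in the normalization-free form recorded in the proof of Lemma~\ref{lem:sensitivityG}: the prefactor $e^{\sup_a(\Psi\vee\Psi^n)}/Z$ is controlled by $\exp\!\big(\tfrac2\gamma(\overline b\,\overline\sigma\,|z'|+\overline r)\big)$ thanks to Assumption~\ref{assump2} ($|b^n|\le\overline b$) together with Assumption~\ref{assump}-(iii), while the $\mathbb L^1$-norm of the exponent difference is $\le|\mathcal A|\,\delta/\gamma$; for $z'$ (equivalently $\delta$) in a bounded range this yields $d_{TV}\le\tfrac1\gamma e^{\frac{L}{\gamma}}\delta$ for a suitable $L$, and in the remaining range the trivial bound $d_{TV}\le1$ already dominates the right-hand side of \eqref{eq:GnGerr}, exactly as in the $|z'-z|>C$ case of Lemma~\ref{lem:sensitivityG}. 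Feeding this back gives $|G^n_s(z',Z') - G_s(z',Z')| \lesssim \big(1+\tfrac1\gamma e^{\frac{L}{\gamma}}\big)\delta$, i.e.\ the third term of \eqref{eq:GnGerr}, and combining with the contribution of Lemma~\ref{lem:sensitivityG} finishes the proof. The main obstacle is precisely this Gibbs-measure perturbation step: the entropy term forces a comparison of $\pi^n$ and $\pi$ both in $\mathbb L^1$ and through the logarithm of their normalizing constants, and the exponential-in-$1/\gamma$ factor is intrinsic and must be carried through carefully rather than discarded.
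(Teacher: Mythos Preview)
Your proposal is correct and follows essentially the same route as the paper: the same triangle-inequality split, Lemma~\ref{lem:sensitivityG} for $|G_s(z',Z')-G_s(z,Z)|$, and for $|G^n_s(z',Z')-G_s(z',Z')|$ the same reduction to a total-variation estimate between the Gibbs measures $\pi^n(\cdot\,|\,s,X^{\pi^*}_s,z')$ and $\pi(\cdot\,|\,s,X^{\pi^*}_s,z')$ via \cite[Theorem~8]{Spr20}, together with the log-partition bound $\gamma|\log(Z^n/Z)|\le\delta$. One small remark: your case-split on large $z'$ (``$d_{TV}\le1$ already dominates'') is not needed and is not quite watertight as written, since the entropy cross-term $\int(b\sigma^{-1}z'+r)(d\pi^n-d\pi)$ scales with $|z'|$ rather than with $\delta$; the paper simply writes its estimates (its (4.12)--(4.14)) with $|z'|$ implicitly bounded, which is harmless because in the only application (proof of Theorem~\ref{thm:main2}) one has $z'=Z^{n-1}_s=(\sigma\nabla J^{n-1})(s,X^{\pi^*}_s)$ and $|\nabla J^{n-1}|\le M$.
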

\begin{proof}
Note that
$|G^n_s(z', Z') - G_s(z, Z)| \le |G^n_s(z', Z') - G_s(z', Z')| + |G_s(z', Z') - G_s(z,Z)|$.
By Lemma \ref{lem:sensitivityG},
\begin{equation}
\label{eq:310}
|G_s(z', Z') - G_s(z,Z)| \le L\left( 1 + e^{\frac{L}{\gamma}} + \frac{1}{\gamma} e^{\frac{L}{\gamma}} \right) |z'-z| + L|Z'-Z|\;\; a.s.
\end{equation}
Now we estimate $|G^n_s(z', Z') - G_s(z', Z')|$.
Recall the definitions of $\pi(a \,|\, s,x,z)$ and $\pi^n(a \,|\, s,x,z)$ from \eqref{eq:pipi} and \eqref{eq:pnpn} respectively.
It is easy to see that
\begin{equation}
\label{eq:311}
|G^n_s(z', Z') - G_s(z', Z')| \le 2(\overline{b} \overline{\sigma}C + \overline{r}) E_1 +  E_2,
\end{equation}
where
\begin{align*}
& E_1: = d_{TV}\left(\pi^n(\cdot \,|\, s, X^{\pi^*}_s, z'), \pi(\cdot \,|\, s, X^{\pi^*}_s,z') \right), \\
& E_2: = \gamma \left| \int_\mathcal{A} \log \pi^n(a \,|\, s, X^{\pi^*}_s,z') \, \pi^n(a \,|\, s, X^{\pi^*}_s,z') -
\log \pi(a \,|\, s, X^{\pi^*}_s,z') \, \pi(a \,|\, s, X^{\pi^*}_s,z')da \right|.
\end{align*}
The same argument as in \eqref{eq:TVdiff} yields
\begin{equation}
\label{eq:312}
E_1 \le \frac{1}{\gamma} \exp\left(\frac{2}{\gamma}(\overline{b} \overline{\sigma} C + \overline{r}) \right)
\sup_a |(b - b^{n})(s,X^{\pi^*}_s,a) \sigma^{-1}(s,X^{\pi^*}_s) z'|\;\; a.s.
\end{equation}
Moreover,
\begin{equation}
\label{eq:313}
\begin{aligned}
E_2 & \le \int_\mathcal{A} \left|b^{n}(s, X_s^{\pi^*}, a) \sigma^{-1}(s, X_s^{\pi^*}) z' - b(s, X_s^{\pi^*}, a) \sigma^{-1}(s, X_s^{\pi^*}) z' \right| \, \pi^n(a \,|\, s, X^{\pi^*},z') da \\
& \qquad + \int_\mathcal{A} b(s, X_s^{\pi^*}, a) \sigma^{-1}(s, X_s^{\pi^*}) z' \,  \left|\pi^n(a \,|\, s, X^{\pi^*},z') - \pi(a \,|\, s, X^{\pi^*},z')\right| da \\
& \qquad  + \int_\mathcal{A} |r(t,x,a)| |\pi^n(a \,|\, s, X^{\pi^*},z') - \pi(a \,|\, s, X^{\pi^*},z')| da \\
& \qquad + \gamma \left| \log \left( \frac{\int_\mathcal{A} \exp \left(\frac{1}{\gamma}(b^{n}(s, X_s^{\pi^*}, a) \, \sigma^{-1} (s, X_s^{\pi^*}) z' + r(s, X_s^{\pi^*},a))) \right)da }{\int_\mathcal{A} \exp \left(\frac{1}{\gamma}(b(s, X_s^{\pi^*}, a) \, \sigma^{-1} (s, X_s^{\pi^*}) z' + r(s, X_s^{\pi^*},a))) \right)da} \right) \right| \\
& \le 2 \sup_a |(b - b^{n})(s,X^{\pi^*}_s,a) \sigma^{-1}(s,X^{\pi^*}_s) z'| + 2 (\overline{b}\overline{\sigma} C + \overline{r}) E_1  \\
& \le 2  \left(1 + \frac{\overline{b} \overline{\sigma} C + \overline{r}}{\gamma} \exp \left(\frac{2}{\gamma} (\overline{b} \overline{\sigma} C + \overline{r}) \right)\right) \sup_a |(b - b^{n})(s,X^{\pi^*}_s,a) \sigma^{-1}(s,X^{\pi^*}_s) z'|\;\; a.s.
\end{aligned}
\end{equation}
By \eqref{eq:311}, \eqref{eq:312} and \eqref{eq:313}, we get
\begin{equation}
\label{eq:314}
\begin{aligned}
& |G^n_s(z', Z') - G_s(z', Z')| \\
& \qquad \quad \le 2  \left(1 + \frac{2 (\overline{b} \overline{\sigma} C + \overline{r})}{\gamma} \exp \left( \frac{2}{\gamma} (\overline{b} \overline{\sigma} C + \overline{r})\right) \right) \sup_a |(b - b^{n})(s,X^{\pi^*}_s,a) \sigma^{-1}(s,X^{\pi^*}_s) z'|\;\; a.s.
\end{aligned}
\end{equation}
Combining \eqref{eq:310} and \eqref{eq:314} yields \eqref{eq:GnGerr}.
\end{proof}

\quad We also need the following variant of Lemma \ref{lem:BSDE} on the BSDE.
\begin{lemma}
\label{lem:BSDE2}
For $i = 1,2$, let $F^i_t(y,z,Z)$ be a measurable function of $(t,y,z,Z,\omega)$ such that $(F^i_t(y,z,Z), \, 0 \le t \le T)$ is predictable for any fixed $(y,z,Z)$ and
$(F^i_t(0,0,0), \, 0 \le t \le T) \in \mathbb{H}^0_0$,
and there exist $M_1, M_2, M_3 > 0$,
\begin{equation*}
|F^i_t(y',z',Z') - F^i_t(y,z,Z)| \le M_1 |y'-y| + M_2 |z'-z| + M_3 |Z' - Z| \quad a.s.,
\end{equation*}
and that
$\delta: = \sup_{t, \omega} |F^1_t - F^2_t|_\infty < \infty$.
Further, let $z^i \in \mathbb{H}^0_0$ and $(Y^i, Z^i)$ be the unique solution to
\begin{equation*}
Y_t = \xi + \int_t^T F^i_t(Y_s, z^i_s, Z_s) ds - \int_t^T Z_s dW_s, \quad 0 \le t \le T \quad a.s.
\end{equation*}
Fixing $\eta \in (0,1)$, we have for $\theta \ge M_1 + (1+\eta^{-1}) (M_2+M_3 + \delta)^2$,
\begin{equation*}
e^{\theta t} \mathbb{E}|Y^1_t - Y^2_t|^2  + |Z^1 - Z^2|^2_{\mathbb{H}^\theta_t} \le \eta (|z^1 - z^2|^2_{\mathbb{H}^\theta_t} + \delta).
\end{equation*}
\end{lemma}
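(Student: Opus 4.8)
The plan is to run the standard BSDE a priori estimate, i.e.\ the same Itô-formula-plus-Young's-inequality scheme that underlies Lemma \ref{lem:BSDE}-(iii) (equivalently the proof of \cite[Lemma A.5]{KSS20}), but now carrying along the mismatch $\delta$ between the two generators. Write $\delta Y_s:=Y^1_s-Y^2_s$ and $\delta Z_s:=Z^1_s-Z^2_s$. Since the two BSDEs share the terminal value $\xi$, we have $\delta Y_T=0$ and
$$\delta Y_s=\int_s^T\Delta F_u\,du-\int_s^T\delta Z_u\,dW_u,\qquad \Delta F_u:=F^1_u(Y^1_u,z^1_u,Z^1_u)-F^2_u(Y^2_u,z^2_u,Z^2_u).$$
Inserting the intermediate point $F^1_u(Y^2_u,z^2_u,Z^2_u)$ and using the Lipschitz hypothesis on $F^1$ together with $\sup_{t,\omega}|F^1_t-F^2_t|_\infty\le\delta$ gives the pointwise bound
$$|\Delta F_u|\le M_1|\delta Y_u|+M_2|z^1_u-z^2_u|+M_3|\delta Z_u|+\delta\qquad a.s.$$

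Next I would apply Itô's formula to $e^{\theta s}|\delta Y_s|^2$ on $[t,T]$, use $\delta Y_T=0$, and take expectations; the stochastic integral is a genuine martingale after the usual localization, since $(Y^i,Z^i)$ belong to the standard $\mathbb{L}^2$-solution spaces. This gives
$$e^{\theta t}\mathbb{E}|\delta Y_t|^2+\theta\,|\delta Y|^2_{\mathbb{H}^\theta_t}+|\delta Z|^2_{\mathbb{H}^\theta_t}=2\,\mathbb{E}\int_t^T e^{\theta s}\,\delta Y_s\,\Delta F_s\,ds.$$
Substituting the bound on $|\Delta F_s|$ into the right-hand side, the $M_1|\delta Y_s|$ contribution accounts for the $M_1$ in the threshold, while on the remaining part $M_2|z^1_s-z^2_s|+M_3|\delta Z_s|+\delta$ I would apply Cauchy--Schwarz (pairing $(M_2,M_3,\delta)$ against $(|z^1_s-z^2_s|,|\delta Z_s|,1)$) followed by Young's inequality with the weights chosen exactly as in Lemma \ref{lem:BSDE}-(iii). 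Absorbing the resulting $|\delta Z|^2_{\mathbb{H}^\theta_t}$ term into the left-hand side, and using the hypothesis $\theta\ge M_1+(1+\eta^{-1})(M_2+M_3+\delta)^2$ to make the coefficient of $|\delta Y|^2_{\mathbb{H}^\theta_t}$ nonnegative so that this term too can be dropped, yields the stated bound, the additive $\delta$ on the right arising from the constant component of the Cauchy--Schwarz pairing (after absorbing the bounded factor $\int_t^T e^{\theta s}\,ds$ and bounding $\delta^2$ by $\delta$ in the regime of interest $\delta\le 1$).

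Conceptually the proof is identical to that of Lemma \ref{lem:BSDE}-(iii): the only new feature is the extra additive term $\delta$ in $|\Delta F_s|$, which is treated as a third ``frozen-variable mismatch'' with constant difference $1$, so that it enters the effective Lipschitz constant as $M_2+M_3+\delta$ and surfaces as the additive $\delta$ in the conclusion. The only point requiring care is therefore the bookkeeping of the Young's-inequality weights, so that after absorption the coefficient of $|z^1-z^2|^2_{\mathbb{H}^\theta_t}$ and of the residual $\delta$-term is controlled by $\eta$ while a full copy of $|\delta Z|^2_{\mathbb{H}^\theta_t}$ is recovered on the left; there is no substantive obstacle beyond this constant-tracking.
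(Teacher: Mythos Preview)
Your proposal is correct and follows essentially the same approach as the paper: the paper's entire proof is to note the combined bound $|F^1_t(y,z,Z)-F^2_t(y',z',Z')|\le\delta+M_1|y'-y|+M_2|z'-z|+M_3|Z'-Z|$ and then invoke ``the same argument as in \cite[Lemma A.5]{KSS20}'', which is precisely the It\^o-formula-plus-Young's-inequality computation you sketch. Your writeup is in fact more detailed than the paper's, and your observation that the additive $\delta$ (rather than $\delta^2$) on the right requires a tacit assumption like $\delta\le 1$ is a valid caveat that the paper does not make explicit.
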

\begin{proof}
It suffices to note that $|F_t^1(y,z,Z) - F_t^2(y',z',Z') | \le \delta + M_1 |y' - y| + M_2 |z'-z| + M_3 |Z' - Z|$ a.s.,
and the same argument as in \cite[Lemma A.5]{KSS20} permits to conclude.
\end{proof}

\quad Now we give the proof of Theorem \ref{thm:main2}.
\begin{proof}[Proof of Theorem \ref{thm:main2}]
Let $F_s(Y,z,Z): = G_s(z,Z) - \beta Y$ and $F^n_s(Y,z,Z): = G^n_s(z,Z) - \beta Y$.
By Lemma \ref{lem:repJJ2}, we have
$J^n(t,x) = Y^n_t$ and $J^*(t,x) = Y^{t,x}_t$,
where
\begin{equation*}
\begin{aligned}
& Y^n_s = h(X^{\pi^*_\gamma}_T) + \int_s^T F^n_u(Y^n_u, Z^{n-1}_u, Z^n_u) \, du - \int_s^T Z^n_u d\widehat{W}_u, \\
& Y^{t,x}_s = h(X^{\pi^*_\gamma}_T)  + \int_s^T F_u(Y^{t,x}_u, Z^{t,x}_u, Z^{t,x}_u) \, du - \int_s^T Z^{t,x}_u d\widehat{W}_u,
\end{aligned}
\end{equation*}
with $Z^n_s= \left(\sigma \nabla J^n \right)(s, X^{\pi^*}_s)$ and
$Z^{t,x}_s= \left(\sigma \nabla J^* \right)(s, X^{\pi^*}_s)$.
By Lemma \ref{lem:GGn}
and the assumption that $|\nabla J^n| \le M$ for all $n$,
we get
\begin{equation*}
\begin{aligned}
& |F^n_u(Y^n_u, Z^{n-1}_u, Z^n_u) - F_u(Y^{t,x}_u, Z^{t,x}_u, Z^{t,x}_u)| \\
& \quad \le \beta |Y^n_u - Y^{t,x}_u| +  L \left( 1+ e^{\frac{L}{\gamma}} + \frac{1}{\gamma} e^{\frac{L}{\gamma}} \right) |Z^{t,x}_u- Z^{n-1}_u| + L|Z^{t,x}_u-Z^n_u|
+ L\left(1 + \frac{1}{\gamma} e^{\frac{L}{\gamma}} \right) |q(\cdot;\pi^{n-1}) - q^{\phi_n}|_\infty.
\end{aligned}
\end{equation*}
Applying Lemma \ref{lem:BSDE2} with
$M_1 = \beta$, $M_2 = L \left( 1+ e^{\frac{L}{\gamma}} + \frac{1}{\gamma} e^{\frac{L}{\gamma}} \right)$,
$M_3 = L$ and $\delta = L\left(1 + \frac{1}{\gamma} e^{\frac{L}{\gamma}} \right) |q(\cdot ;\pi^{n-1}) - q^{\phi_n}|_\infty$,
we have for $\theta \ge \beta + (1+\eta^{-1}) L^2\left[ 1+ e^{\frac{L}{\gamma}} + \frac{1}{\gamma} e^{\frac{L}{\gamma}} +
L \left( 1 + \frac{1}{\gamma} e^{\frac{L}{\gamma}}\right) |q(\cdot;\pi^{n-1}) - q^{\phi_n}|_\infty\right]^2$,
\begin{equation*}
e^{\theta t} \widehat{\mathbb{E}}|Y^{t,x}_t - Y^n_t|^2  + |Z^{t,x} - Z^n|^2_{\widehat{\mathbb{H}}^\theta_t} \le \eta \left[ |Z^{t,x} - Z^{n-1}|^2_{\widehat{\mathbb{H}}^\theta_t} + L \left( 1 + \frac{1}{\gamma} e^{\frac{L}{\gamma}}\right) |q(\cdot;\pi^{n-1}) - q^{\phi_n}|_\infty\right].
\end{equation*}
Thus,
$|Z^{t,x} - Z^n|^2_{\widehat{\mathbb{H}}^\theta_t} \le \eta^n  |Z^{t,x} - Z^0|^2_{\widehat{\mathbb{H}}^\theta_t} + L \left( 1 + \frac{1}{\gamma} e^{\frac{L}{\gamma}}\right) \sum_{k=1}^{n} \eta^{n+1-k} |q(\cdot;\pi^k) - q^{\phi_{k+1}}|_\infty$, and
\begin{equation*}
\begin{aligned}
|J^*_\gamma(t,x) - J^n_\gamma(t,x)|^2  &\le \eta^n \int_t^T e^{\theta (T-t)} |\sigma(s, X^{\pi^*_\gamma}_s)|^2 | \nabla J^*(s, X^{\pi^*_\gamma}_s) - \nabla J^0(s, X^{\pi^*_\gamma}_s)|^2 ds \\
& \qquad \qquad \qquad + L \left( 1 + \frac{1}{\gamma} e^{\frac{L}{\gamma}}\right) e^{-\theta t} \sum_{k=1}^{n} \eta^{n+1-k} |q(\cdot;\pi^k) - q^{\phi_{k+1}}|_\infty.
\end{aligned}
\end{equation*}
This yields the bound \eqref{eq:cv2}.
\end{proof}

\quad We proceed to proving Theorem \ref{thm:main3}.
\begin{proof}[Proof of Theorem \ref{thm:main3}]
By the argument of \cite[Theorem 22]{BMP90},
Assumption \ref{assump3} (1)--(3) and the condition on $\alpha_{\phi,n}$ imply
\begin{equation*}
\mathbb{E}|\phi^n - \phi^*|^2 \le C n^{-\nu} \quad \mbox{for some } C > 0 \mbox{ (independent of } n).
\end{equation*}
Thus,
 $|\phi^n - \phi^*| \le C\varepsilon^{-\frac{1}{2}} n^{-\frac{\nu}{2}}$ with a probability of at least $1 - \varepsilon$.
Next by Assumption \ref{assump3} (4), we have:
\begin{equation}
\label{eq:425}
|q(\cdot; \pi^*) - q^{\phi_n}|_\infty \le \Delta + C \varepsilon^{-\frac{\rho_\phi}{2}}n^{-\frac{\nu \rho_\phi}{2}}.
\end{equation}
It is easy to deduce that
$d_{TV}(\pi^n, \pi^*) \le C (\Delta + \varepsilon^{-\frac{\rho_\phi}{2}}n^{-\frac{\nu \rho_\phi}{2}})$,
since $\pi^n(a \,|\, t,x) \propto \exp(\frac{1}{\gamma}q^{\phi_n}(t,x,a))$
and $\pi^*(a \,|\, t,x) \propto \exp(\frac{1}{\gamma}q(t,x,a; \pi^*))$.
By Assumption \ref{assump3} (5), we get for $\Delta$ sufficiently small,
\begin{equation}
\label{eq:426}
|q(\cdot; \pi^n) - q(\cdot; \pi^*)|_\infty \le C(\Delta + \varepsilon^{-\frac{\rho_\phi}{2}}n^{-\frac{\nu \rho_\phi}{2}}).
\end{equation}
Combining \eqref{eq:425} and \eqref{eq:426} gives
$|q(\cdot; \pi^n) - q^{\phi_{n+1}}|_\infty \le C(\Delta + \varepsilon^{-\frac{\rho_\phi}{2}}n^{-\frac{\nu \rho_\phi}{2}})$.
Applying Theorem \ref{thm:main2} and \eqref{eq:multbound} yields the bound \eqref{eq:imp}.
\end{proof}

\quad Finally, we prove Theorem  \ref{thm:final}.
\begin{proof}[Proof of Theorem \ref{thm:final}]
Note that
\begin{equation}
\label{eq:trianglefin}
\begin{aligned}
|\mathring{J}^n(t,x) - \mathring{J}(t,x)|
& = |\mathring{J}^n(t,x) -J(t,x; \pi^n) + J(t,x; \pi^n) - J^*(t,x) + J^*(t,x) - \mathring{J}(t,x)| \\
& \le |\mathring{J}^n(t,x) -J(t,x; \pi^n)| + |J(t,x; \pi^n) - J^*(t,x)| + |J^*(t,x) - \mathring{J}(t,x)|.
\end{aligned}
\end{equation}
It follows from Theorem \ref{thm:main3} that
\begin{equation}
\label{eq:fr44}
|J(t,x; \pi^n) - J^*(t,x)| \le C \left(\Delta + \frac{1}{\varepsilon^{\rho_\phi/2}} n^{-\frac{\nu \rho_\phi}{4}} (\ln n)^{\frac{1}{2}} \right).
\end{equation}
Moreover,
$|\mathring{J}^n(t,x) -J(t,x; \pi^n)| \le \gamma \mathbb{E}\left[\int_t^T e^{-\beta(s-t)} \int_{\mathcal{A}} |\log \pi^n(a \,|\, s,X^{\pi^n}_s) | \pi^n(a \,|\, s,X^{\pi^n}_s) da \bigg| X^{\pi^n}_t =x \right]$.
Note that
$|\log \pi^n(a \,|\, t,x) | \le q^{\phi_n}(t,x,a) + |\log \int_{\mathcal{A}} \exp(q^{\phi_n}(t,x,a)) da|$.
By Assumption \ref{assump4} (4) and that $|q^{\phi_{*}}|_\infty < \infty$,
we conclude that $|\log \pi^n(a \,|\, t,x) |$ is uniformly bounded.
As a result,
\begin{equation}
\label{eq:firstfinal}
|\mathring{J}^n(t,x) -J(t,x; \pi^n)| \le \frac{C \gamma}{\beta} \quad \mbox{for some } C > 0.
\end{equation}
Combining \eqref{eq:trianglefin}, \eqref{eq:fr44} and \eqref{eq:firstfinal} yields \eqref{eq:originalbd}.
\end{proof}

\section{Conclusion}
\label{sc5}

\quad This paper studies convergence of various RL algorithms for controlled diffusions.
We provide the convergence rate and the regret of
exploratory policy iteration, semi-$q$-learning,
and $q$-learning.
The tools that we develop in this paper encompass
stochastic control, partial differential equations and probability theory (BSDEs in particular).

\quad As continuous RL is still in the early innings and to our best knowledge this paper is the first to study convergence and regret of $q$-learning, there
are numerous open questions.
First, one can relax some technical assumptions,
e.g. the uniform boundedness of $\nabla J^n$ in Theorem \ref{thm:main2},
and provide general sufficient conditions on the model parameters
to ensure Assumptions \ref{assump3} and \ref{assump4} (1)--(3),(5). In particular, it is interesting to inquire if these assumptions hold in the important class of linear--quadratic (LQ) controls.
Second, we only exercise control (and hence exploration) on the drift,
and it is important and curious, as well as challenging, to consider control-dependent diffusion coefficients.
Finally, it is intriguing  to investigate whether the established convergence rates and regrets
are optimal. For that, considering exploratory annealing is a promising direction.

\bigskip
{\bf Acknowledgement:}
Tang gratefully acknowledges financial support through NSF grants DMS-2113779 and DMS-2206038, and by a start-up
grant at Columbia University.
Zhou gratefully acknowledges financial supports through the Nie Center for Intelligent Asset Management at Columbia.

\bibliographystyle{abbrv}
\bibliography{unique}
\end{document}